\def\eqref#1{equation~\ref{#1}}
\def\1{\bm{1}}
\def\va{{\bm{a}}}
\def\vb{{\bm{b}}}
\def\vc{{\bm{c}}}
\def\vp{{\bm{p}}}
\def\vs{{\bm{s}}}
\def\vu{{\bm{u}}}
\def\vx{{\bm{x}}}
\DeclareMathAlphabet{\mathsfit}{\encodingdefault}{\sfdefault}{m}{sl}
\SetMathAlphabet{\mathsfit}{bold}{\encodingdefault}{\sfdefault}{bx}{n}
\DeclareMathOperator*{\argmax}{arg\,max}
\DeclareMathOperator*{\argmin}{arg\,min}
\newtheoremstyle{exampstyle}
  {\topsep} 
  {\topsep} 
  {} 
  {} 
  {\bfseries} 
  {.} 
  {.3em} 
  {} 
\theoremstyle{exampstyle} \newtheorem{theorem}{Theorem}[section]
\newtheorem*{theorem*}{Theorem}
\newtheorem{remark}{Remark}
\title{Low-Cost Algorithmic Recourse for Users With Uncertain Cost Functions}
\author{Prateek Yadav, \ Peter Hase, {\normalfont and} Mohit Bansal \\
Department of Computer Science\\
UNC Chapel Hill\\
\texttt{\{prateek,peter,mbansal\}@cs.unc.edu}}
\begin{document}

\maketitle

\begin{abstract}
People affected by machine learning model decisions may benefit greatly from access to recourses, i.e. suggestions about what features they could change to receive a more favorable decision from the model. 
Current approaches try to optimize for the cost incurred by users when adopting a recourse, but they assume that all users share the same cost function. This is an unrealistic assumption because users might have diverse preferences about their willingness to change certain features.
In this work, we introduce a new method for identifying recourse sets for users which does not assume that users' preferences are known in advance.
We propose an objective function, \textit{Expected Minimum Cost} (EMC), based on two key ideas: (1) when presenting a set of options to a user, there only needs to be one low-cost solution that the user could adopt;
(2) when we do not know the user's true cost function, we can approximately optimize for user satisfaction by first sampling plausible cost functions from a distribution, then finding a recourse set that achieves a good cost for these samples.
We optimize EMC with a novel discrete optimization algorithm, \textit{Cost-Optimized Local Search} (COLS), which is guaranteed to improve the recourse set quality over iterations. 
Experimental evaluation on popular real-world datasets with simulated users demonstrates that our method satisfies up to 25.89 percentage points more users compared to strong baseline methods, while human evaluation shows that our recourses are preferred more than twice as often as the strongest baseline recourses.
Finally, using standard fairness metrics we show that our method can provide more fair solutions across demographic groups than baselines.\footnote{We provide our source code at: \scriptsize{\url{https://github.com/prateeky2806/EMC-COLS-recourse}}.}
\vspace{-10pt}
\end{abstract}

\section{Introduction}
\label{sec:intro}

Over the past few years ML models have been increasingly deployed to make critical decisions related to loan approval \citep{siddiqi2012credit}, insurance \citep{wsj2019lifeinsurance}, allocation of public resources \citep{chouldechova2018case,shroff2017predictive} and hiring decisions \citep{ajunwa2016hiring}. Decisions from these models have real-life consequences for the individuals (users) involved. As a result, there is a growing emphasis on explaining these models' decisions \citep{ribeiro2018anchors, Lundberg2017AUA, poulin2006visual} and providing \emph{recourse} for unfavorable decisions \citep{EUdataregulations2018,Karimi2020ModelAgnosticCE}.
A \textit{recourse} is an actionable plan that allows someone to change the decision of a deployed model to a desired alternative \citep{Wachter2017CounterfactualEW}, which is often presented to users as a set of counterfactuals. 
Recourses can be highly valuable for users in situations where model decisions determine important life outcomes.
Recourses are desired to be \textit{actionable}, \textit{feasible}, and \textit{non-discriminatory}. 
\textit{Actionable} means that only features which can be changed by the user are requested to be changed. These changes should also be possible under the data distribution. 
For example, \textit{Education level} cannot be decreased from a \textit{Masters} to \textit{Bachelors} degree but can be increased from \textit{Masters} to \textit{PhD}. It is also not actionable to change your \textit{Race} \citep{dice_fat}.
A recourse is \textit{feasible} if it is reasonably easy for the user to adopt, i.e. it is actionable 
and has a low cost for the user. 
\textit{Non-Discriminatory} means that the recourse method should be equitable across population subgroups.
There are many fairness metrics now available \citep{Hinnefeld2018EvaluatingFM}, e.g. the ratio between the average cost of recourse for two subgroups in a population.

\begin{wrapfigure}{r}{0.5\textwidth}
    \centering
    \vspace{-5pt}
    \includegraphics{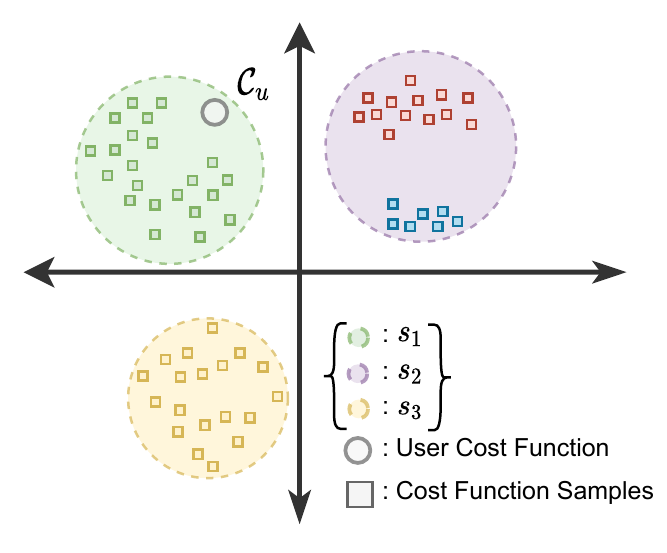}
    \caption{\label{fig:diagram} Method diagram showing the intuition behind the Expected Minimum Cost Objective. The squares denote cost function samples, which are the same color when they are similar.
    We aim to find a solution set of generated counterfactuals where each counterfactual does well under a particular region of cost function space (here, $\{s_1, s_2, s_3\}$). The shaded regions each represent a set of cost functions which a single $s_i$ caters to. In this case, we do not have enough counterfactuals to ``cover" every region of the cost function space, so a single counterfactual ($s_2$) must cater to two different regions. Here the user's hidden ground-truth cost function, $\mathcal{C}_\vu$, is served well by $s_1$. 
    }
\end{wrapfigure}

While we want recourses to be feasible for all users, it is difficult to directly optimize for a user's incurred cost unless we have access to their ground-truth cost function. In the absence of detailed cost function data,
prior work has used other heuristic objectives for feasibility. For instance, \citet{dice_fat} and \citet{Wachter2017CounterfactualEW} work with the underlying assumption that if the vector distance between the user's current state and the recourse is small, then recourse will be low cost. These works encourage this property via a \textit{proximity} objective. 
Meanwhile, \textit{sparsity} quantifies the number of features that require modification to implement a recourse \citep{dice_fat}. 
When providing multiple recourse options, \textit{feature diversity} in proposed recourses is used to counter uncertainty around the user cost function \citep{dice_fat, Cheng2021DECEDE}. The assumption is that if users are provided with solutions that change different subset of features, then they are more likely to find at least one feasible solution. 
Without access to ground-truth cost functions, a few recourse methods assume a known cost function that is shared by all users, then optimize for low-cost solutions under this function \citep{ar_fat, ares_neurips,karimi2020mintrecourse,karimi2020caterecourse,cui2015optimal}.
We believe it is crucial to have user-specific cost functions, because a global cost function is likely to poorly represent different users in a diverse population. 

In this work, we propose a method for identifying a user-specific recourse set that contains at least one good solution for the user. 
In the absence of data about users' true cost functions, we treat them as unknowns for the recourse method and assume they follow an underlying cost function distribution. We model this cost distribution via a highly flexible cost sampling procedure which makes minimal assumptions about user preferences. In addition, we provide users with an option to specify their preferred editable features or the complete cost function detailing the costs of transitions between features values.

We propose an objective function, \textit{Expected Minimum Cost} (EMC), allowing to approximately optimize for user incurred cost by first sampling plausible cost functions from a proposed distribution, then finding a recourse set that achieves a low cost over these cost function samples (in expectation). The EMC objective encourages the solution set to consist of counterfactuals that are each a good counterfactual under some particular cluster of cost functions from the sampling distribution.
Hence, if the user's ground-truth cost function is well represented by any of the clusters, then we will have \emph{some} counterfactual that is well suited (low-cost) to the user's cost function (shown in Figure \ref{fig:diagram}).
Lastly, we propose a discrete optimization method, \textit{Cost-Optimized Local Search} (COLS), in order to optimize for Expected Minimum Cost. COLS guarantees a monotonic reduction in the Expected Minimum Cost of the counterfactual set, leading to large empirical reductions in user-incurred cost.

To evaluate the effectiveness of our proposed techniques, we run experiments on two popular real-world datasets: Adult-Income \citep{adult} and COMPAS \citep{propublica_compas}. We compare our method with multiple strong baselines methods like Diverse Counterfactual Explanations (DICE) \citep{dice_fat}, Feasible and Actionable Counterfactual Explanations (FACE) \citep{face}, and Actionable Recourse (AR) \citep{ar_fat}. We evaluate these methods on existing metrics from the literature like diversity, proximity, sparsity, and validity along with 
two cost-based metrics (Section \ref{sec:setup}) and a human evaluation. In particular, we define the \textit{fraction of satisfied users} based on whether their cost of recourse is below a certain satisfiability threshold $k$. We also report \textit{coverage}, which is the fraction of users with at least one actionable recourse \citep{ares_neurips}.
Using simulated user true cost functions, we show that our method satisfies up to 25.89\% percentage points more users than strong baseline methods while covering up to 22.35\% more users across datasets.
Furthermore, our human evaluation shows that the recourses generated by our method are preferred by humans 57\% of the times as compared to 25\% for our strongest baseline (Actionable Recourse), a difference of 32 percentage points.

We also perform important ablations to show what fraction of the performance can be attributed to COLS optimization method or the EMC objective.
Additionally, we evaluate the robustness of our method to various distribution shifts that can occur between the population's true cost function distribution and our proposed cost sampling distributions (see Section \ref{sec:experiments}).
We find that our method is robust to these distribution shifts and generalizes well even when the cost functions for EMC objective are sampled from these shifted distributions.
Finally, we perform a fairness analysis of all the methods across demographic subgroups based on \textit{Gender} and \textit{Race}. Standard fairness metrics demonstrate that, in most comparisons, our method is more fair than the strongest baseline methods. \\
Our primary contributions in this paper are listed below.
\vspace{-5pt}
\begin{enumerate}[noitemsep,nolistsep]
    \item We evaluate user-incurred cost and fraction of satisfied users by means of user-specific cost functions, rather than a known global cost function.
    \item We propose a new objective function, Expected Minimum Cost (EMC), which approximately optimizes for user-incurred cost when their true cost function is not known, by sampling from a plausible distribution.
    \item We propose a discrete optimization method, \textit{Cost-Optimized Local Search} (COLS), which achieves up to 25.89\% percentage points higher user satisfaction relative to the next-best baseline. Under human evaluation, we find that our recourses are preferred more than twice as often as the strongest baseline recourses.
    \item We show that in most settings, our method provides more fair solutions across demographics subgroup than comparable recourse methods.
    
\end{enumerate}
\vspace{-7pt}
\section{Related Work} 
\label{sec:related}
Many methods now exist for generating recourses. We point readers to \citet{venkatasubramanian2020philosophical} for a philosophical basis for algorithmic recourse and to \citet{Karimi2021survey} for a comprehensive survey of the existing recourse methods.
Here, we distinguish our approach based on our recourse objectives, optimizer, and evaluation. 

\textbf{Objectives:}
The most prominent family of objectives for recourse includes distance-based objectives  \citep{Wachter2017CounterfactualEW,Karimi2020ModelAgnosticCE,Dhurandhar2018ExplanationsBO,dice_fat,rasouli2021care}. These methods primarily seek recourses that are close to the original data point. In DICE, \citet{dice_fat} provide users with a set of counterfactuals while trading off between proximity and feature diversity, which are distance-based objectives. Feature diversity based methods assume that changing different feature subsets in different recourses will increase the chance a user is satisfied by one of the options. \textit{In contrast, our method promotes diversity in the recourse set such that it has low cost for a wide variety of cost function samples from the proposed distribution.}

A second category of methods uses other heuristics based on the data distribution \citep{aguilar2020cold,gomez2020vice} to come up with counterfactuals. FACE constructs a graph from the given data and then tries to find a high-density path between points in order to generate counterfactuals \citep{face}. 
Lastly, the works closest to ours are the cost-based objectives, which capture feasibility in terms of the cost of recourse:
(1) \citet{cui2015optimal} define a cost function 
specifically for tree-based classifiers, which compares the different paths that two data points follow in a tree to obtain a classifier-dependent measure of cost.
(2) \citet{karimi2020mintrecourse,karimi2020caterecourse} take a causal intervention perspective on the task and define cost in terms of the normalized distance between the user state and the counterfactual. 
(3) \citet{ar_fat} define cost in terms of the number of changed features and frame recourse generation as an Integer Linear Program. (4) 
\citet{ares_neurips} infer global cost function from pairwise comparisons of features that are drawn from simulated users. 
However, they take a different approach to the recourse generation problem, which is to find a list of rules that can apply to any user to obtain a recourse, rather than specially generating recourses for each user as in this work. 
Importantly, all of these works assume there is a known global cost function that is shared by all users.
\textit{In our work, we drop this assumption, and each user has their own personal cost function.}

\textbf{Optimization:} Early work on recourse methods uses gradient-based optimization to search for counterfactuals close to a user's data point \citep{Wachter2017CounterfactualEW}. Several methods since then also use gradient-based optimization \citep{dice_fat,chen2020strategic}. Some recent approaches use tree-based techniques \citep{ares_neurips,von2020fairness,hashemi2020permuteattack,kanamoridace} or kernel-based methods \citep{dandl2020multi,gomez2020vice,ramon2019counterfactual}, while others employ some heuristic \citep{face,aguilar2020cold} to generate counterfactuals. A few works use autoencoders to generate recourses \citep{vae_1,Joshi2019TowardsRI}, while \cite{Karimi2020ModelAgnosticCE} and \cite{ar_fat} utilize SAT and ILP solvers, respectively. Drawing on general search principles \cite{pirlot_search}, \textit{we introduce a discrete optimization method (COLS) specialized but not limited to our EMC objective.}

\textbf{Evaluation:} Besides ensuring that recourses are classified as the desired outcome by a model (validity), the most prominent approaches to evaluate recourses rely on Distance-based metrics. In DICE, \citet{dice_fat} evaluate recourses according to their proximity, sparsity, and feature diversity. 
Meanwhile, several works directly consider the cost of the recourses, using a known global cost function as a metric, meaning that all users share the same cost function. \emph{This single cost function is used for both recourse generation and evaluation, i.e. the solutions are optimized and tested on the same cost function} \citep{cui2015optimal,karimi2020mintrecourse,karimi2020caterecourse}. 
Rather than assuming a cost function, \citet{ares_neurips} estimate a cost function from simulated pairwise feature comparisons, but this single estimate is used (for all users) for both optimizing recourses and evaluation. 
In contrast, \textit{we evaluate a recourse method by simulating user-specific cost functions which can vary greatly across users, and our method does not assume that these cost functions are known in advance.} 

\vspace{-7pt}
\section{Problem Statement}
\label{sec:problem_statement}

\textbf{Notation.}
\label{sec:notation}
We assume that we have a dataset with features $\mathcal{F} = \{f_1, f_2, ... f_k\}$.
Features can either be mutable, conditionally mutable, or immutable, according to the real-world causal processes that generate the data. 
For example, \textit{Race} is an immutable feature \citep{dice_fat}, \textit{Age} and \textit{Education} are conditionally mutable (cannot be decreased under any circumstances), and \textit{number of work hours} is mutable (can both increase and decrease). Note that continuous features are always discretized to integers.

\textbf{User Definition.}
\label{sec:user_definition}
A user is defined as a tuple $\vu = (\vs_{u}, \mathcal{C}^{*}_{u})$, where $\vs_u$ is the state vector of length $|\mathcal{F}|$ containing the user's features values and $\mathcal{C}^{*}_{u} = \{\mathcal{C}^{*(f)}_{u} ~|~ \forall f \in \mathcal{F}\}$ is their hidden ground-truth cost function (see Appendix Table \ref{tab:examples} for a qualitative example). 
Following past work \citep{ares_neurips}, we note that it may be difficult for users to precisely quantify their own cost functions in practice, but we do assume that it exists \cite{utilityMET}.

\textbf{Cost Function.}
\label{sec:cost_function}
In this work, we assume that each user has an inherent preference regarding the ease of changing a particular feature, where different users can have different preferences. Such differential preferences can be expressed via user-specific costs of transitioning between feature states. 
We define a \emph{cost function} for each user $\mathcal{C}: \mathbb{R}^{|\mathcal{F}|} \rightarrow \mathbb{R}$ as a set of feature-specific functions which provide the user-incurred cost when transitioning between feature states. Formally, a cost function is parametrized as a set of \emph{functions}, $\mathcal{C} = \{ \mathcal{C}^{(f)}(i, j): \mathbb{R}^{|f| \times |f|} \rightarrow [0,1] \cup \{\infty\} ~ | ~ \forall f \in \mathcal{F} \}$, where $\mathcal{C}^{(f)}(i, j)$ is the cost of transition from state $i \rightarrow j$  for the feature $f$ and $|f|$ is the number of values the feature can take. Here, $0$ means the transition has no associated cost incurred, $1$ means the transition is maximally difficult to make, and $\infty$ means the transition is infeasible.

\textbf{Transition Costs.} 
\label{sec:transition_cost}
Given two state vectors $\vs_i$, $\vs_j$ and any cost function $\mathcal{C}$, transition cost is the summation of transition costs for individual features, defined as $\text{Cost}(\vs_i, \vs_j ; \mathcal{C}) = \sum_{f \in \mathcal{F}} \mathcal{C}^{(f)}(s^{(f)}_i, s^{(f)}_j)$, where $\vs^{(f)}$ is the value of feature $f$ in the state vector. Now, given $\vs_u$, a recourse set $\mathcal{S}$, and a cost function $\mathcal{C}$, we suppose that the cost a user will incur is the \textit{minimum} transition cost across possible recourses, since a rational user will select the least costly option.
\begin{equation} 
    \vspace{-4pt}
    \text{MinCost}(\vs_{u}, \mathcal{S} ; \mathcal{C}) = \min\limits_{s_j \in \mathcal{S}} \text{Cost}(\vs_{u}, \vs_j ; \mathcal{C}).
    \label{eqn:min_cost}
    \vspace{-4pt}
\end{equation}

\textbf{Problem Definition.}
\label{sec:problem_definition}
For a given user $\vu$, our goal is to find a recourse set $\mathcal{S}_u$ such that,
\begin{align}
    \mathcal{S}_u = \argmin\limits_{\mathcal{S}}~ & \text{MinCost}(\vs_{u}, \mathcal{S} ; \mathcal{C}^*_u) \nonumber\\
    & \text{s.t.} ~~ \exists ~ s_i \in \mathcal{S} ~~ \text{s.t.} ~~ F(s_i) = 1 
\label{eqn:main}
\end{align} 
\vspace{-15pt}

where $F$ is the black-box ML model and $1$ is the desired outcome class. We want to offer users at least one counterfactual that is a good solution under their true cost function $\mathcal{C}^*_u$. 
Since do not know the user's true cost function, we minimize the expected cost for each user using a distribution over plausible cost functions. 

\section{Proposed Method: EMC and COLS}
\label{sec:our_method}
\subsection{Approximately Optimizing for User Cost Under Uncertain Cost Functions (EMC)}
\label{sec:approx_objective}
In most use cases, the true cost functions associated with each user \textbf{$\mathcal{C}^{*}_{u} \sim \mathcal{D}^*$ are unknown} to us and follow the \textbf{population's true cost function distribution $\mathcal{D}^*$ which is also not known}. Hence, we cannot exactly minimize the true user cost in Equation \ref{eqn:main}.
Instead, we propose to use a flexible cost function sampling distribution $\mathcal{D}_{train}$ which can model a large variety of cost functions. Given samples from this distribution, we can minimize the \textit{Expected Minimum Cost} (EMC) of a transition for a user. Formally for any user $\vu$ with state vector $\vs_u$, we optimize for
\begin{equation} 
    \mathbb{E}_{\mathcal{C}_i \sim \mathcal{D}_{train}}[\text{MinCost}(\vs_u, \mathcal{S} ; \mathcal{C}_i)]
    \label{eqn:obj}
\end{equation}
In practice, we employ Monte Carlo Estimation \citep{monte_carlo} to approximate this expectation by sampling $M$ cost functions $ \{\mathcal{C}_i\}_{i=1}^M$ from $\mathcal{D}_{train}$. Now, using Eq. \ref{eqn:min_cost}, we give our EMC objective as:
\begin{equation} 
    \text{EMC}(\vs_u, \mathcal{S} ; \{\mathcal{C}_i\}_{i=1}^M) 
    = \frac{1}{M}\sum_{i=1}^{M} \min\limits_{s_j \in \mathcal{S}} \text{Cost}(\vs_{u}, \vs_j ; \mathcal{C}) \label{eqn:mcmc}
\end{equation}

The EMC objective encourages the solution set to consist of counterfactuals that are each a good counterfactual under some particular cluster of cost functions which represent similar preferences in the cost function distribution. That is, the $\min$ term inside the summation allows for different recourses to cater to different regions of the cost function distribution. Hence, if the user's ground-truth cost function is well represented by any of the clusters, then we will have \emph{some} counterfactual that is well suited to the user's true cost function (see in Figure \ref{fig:diagram}). 

\begin{algorithm}[t!]
\footnotesize
\caption{Procedure for Sampling Cost Functions.\label{alg:sampling}}
\DontPrintSemicolon
\KwIn{State vector $\vs$, \textbf{Optional:} Preferred featured $\mathcal{F}_p$, feature preference scores $\vp$, cost distribution mixing weight $\alpha$}
\KwOut{Preference scores $\vp$ and the cost functions $\mathcal{C}$.}
\SetKwBlock{Begin}{function}{end function}
\Begin($\text{sampleCost}{(}\vs, \alpha=\text{None},  \mathcal{F}_p=\{\}, p=\text{None}{)}$)
{
    \uIf{$\mathcal{F}_p$ is \{\}}{
        $\mathcal{F}_p \sim RandomSubset(\mathcal{F}_{mutable})$ 
    }\;
    \uIf{$p$ is \text{None}}{
        $concentration = [1 ~if ~f \in \mathcal{F}_p ~else ~0 ~for ~f ~in ~\mathcal{F} ]$\;
        $p \sim Dirichlet(concentration)$
    }\;
    \uIf{$\alpha ~is~ \text{None}$}{
        $\alpha \sim Uniform(0,1)$
        
    }\;
    $\mathcal{C} = \{\} $\;
    
    \ForAll{$f_i \in \mathcal{F}$}
    {
        \Comment{\scriptsize Get means and variance for costs.}
        $\mu^{(f_i, Lin)}, \sigma^{(f_i, Lin)} \leftarrow LinCost(\vs, p^{(f_i)}, f_i, \mathcal{F}_p)$
        
        $\mu^{(f_i, Perc)}, \sigma^{(f_i, Perc)} \leftarrow PerCost(\vs, p^{(f_i)}, f_i, \mathcal{F}_p)$\\
        
        $\mu^{(f_i)} \longleftarrow \alpha * \mu^{(f_i, Lin)} + (1-\alpha) * \mu^{(f_i, Perc)}$
        
        $\sigma^{(f_i)} \longleftarrow \alpha * \sigma^{(f_i, Lin)} + (1-\alpha) * \sigma^{(f_i, Perc)}$
        
        \Comment{\scriptsize Beta parametrized with mean and variance}
        $\mathcal{C}^{(f_i)} \longleftarrow Beta(\mu^{(f_i)}, \sigma^{(f_i)})$
        
         $\mathcal{C} \gets \mathcal{C} \cup \mathcal{C}^{(f_i)}$
    }
    \Return{$p$, $\mathcal{C}_{p}$}\; 
}
\end{algorithm}

\subsection{Hierarchical Cost Sampling Procedure}
\label{sec:sampling}
To optimize for EMC, we need a plausible distribution $\mathcal{D}_{train}$ which is capable of generating cost functions which model a diverse set of cost functions. 
Past works like \citet{ar_fat} characterize transition cost for users in terms of \textit{percentile shift}. This cost is proportional to the change in a feature's percentile value as it is changed from an old value to a new one (see Algorithm \ref{alg:percentile}). 
In contrast, people also think of cost in terms of the number of steps between the initial and the final feature state. For instance, to change your education from \textit{High-School} to \textit{Masters}, there are two steps, \textit{High-School} to \textit{Bachelors} to \textit{Masters}. As the number of steps increases, the associated cost also increases for users. We assume the increase is linear in the number of steps, so we call this the \textit{Linear Cost} (see Algorithm \ref{alg:linear}). Next, we propose a sampling framework where we condition these costs based on preference scores for feature $f_i$, denoted by $p^{(f_i)}$, where preference scores represent individual users' willingness to change certain features and sum to 1 across features ($\sum_{f_i} p^{(f_i)} = 1$). Specifically, we scale these costs of transition (percentile and linear cost) between feature states by $(1-p^{(f_i)})$ which decreases the cost of transition for preferred features.
Building on these ideas, we propose three cost sampling distributions, $\mathcal{D}_{perc}$, $\mathcal{D}_{lin}$, $\mathcal{D}_{mix}$ which are highly flexible and can model very diverse types of cost functions and preference scores. The distributions $\mathcal{D}_{perc}$, $\mathcal{D}_{lin}$ are based on percentile and linear shifts, whereas $\mathcal{D}_{mix}$ is our most general framework which combines both linear and percentile shift costs with each user's mixing weight $\alpha$ (presented in Algorithm \ref{alg:sampling}).

Our $\mathcal{D}_{mix}$ sampling algorithm (Alg. \ref{alg:sampling}) thus proceeds as follows: we first generate a set of editable features for this user by taking a random subset of mutable features, then we sample preference scores $\vp$ for these editable features via a Dirichlet distribution, which represent the relative ease of editing each feature. Then, we model the final cost of transition of a feature from state $j \rightarrow k$ as a Beta distributed random variable with small noise (std = 0.01) which outputs a cost between 0 and 1. We use a small standard deviation at this step to inject extra noise into user preferences following the Dirichlet sampling.
The mean for this Beta distribution is given by the $\alpha$-weighted combination of the linear and percentile cost, where $\alpha$ is drawn from $Unif(0,1)$. 

We emphasize a few core properties of the $\mathcal{D}_{mix}$ distribution: (1) The distribution is very flexible. It is able to capture all possible feature subsets which different users might consider editable as well as possible mixing weights for combining linear and percentile costs, which is a much broader space of cost functions than considered in past work. Critically, the difficulty of editing each feature can range from ``trivial" to ``maximally difficult" (relative to other features), meaning almost all plausible user cost functions should be represented in the distribution.
(2) Our \textit{Linear} and \textit{Percentile} cost sampling functions yield monotonic costs, i.e. if the user has to make more drastic changes to the feature, then the associated cost will be higher.
(3) The user has an option to adapt this distribution to their needs by providing either the editable features or the feature scores $\vp$. 
Together, these properties allow us to represent a large space of plausible user cost functions, which helps us as it increases the probability of sampled cost functions being close to the user's true cost function.

\subsection{Search Methods for Finding Low Cost Counterfactuals.}
\label{sec:search_method}
\textbf{COLS:} To optimize for Expected Minimum Cost (Equation \ref{eqn:mcmc}), we propose two simple, efficient, and optimized discrete search algorithms \citep{pirlot_search}, namely \textit{Cost-Optimized Local Search} (COLS) and \textit{Parallel Cost-Optimized Local Search} (P-COLS) (refer to Algorithm \ref{alg:cols}).
COLS maintains a best set which will be the final recourse provided to the user. At each iteration, a candidate set is generated by locally perturbing each element of the best set with a Hamming distance of two, i.e. making small changes to two features at once. Then it is evaluated against the EMC objective. Instead of making a direct comparison between the best-set-so-far and the candidate set, at this point we evaluate whether any counterfactuals from the candidate set would improve the best set if we swapped out individual counterfactuals. 
Specifically, if the benefit of replacing $s_i \in \mathcal{S}_t$ with $s_j \in \mathcal{S}^{best}$ is positive then we make the replacement (see Algorithm \ref{alg:theorem}). The ability to assess the benefit of each candidate counterfactual is critical because it allows us to constantly update the best set using individual counterfactuals from a candidate set instead of waiting for an entire candidate set with lower EMC. 
For objectives like feature diversity, evaluating the benefit of individual replacement becomes expensive (see Appendix \ref{sec:supp_method}).
With COLS, we can guarantee that the EMC of the best set will monotonically decrease over time, which we formally state below:

\begin{theorem}[Monotonicity of Cost-Optimized Local Search Algorithm]
\label{thm:monotonicity}
Given the best set, $\mathcal{S}^{best}_{t-1} \in \mathbb{R}^{N \times d}$, the candidate set at iteration $t$, $\mathcal{S}_t \in \mathbb{R}^{N \times d}$, the matrix $\textbf{C}^b \in \mathbb{R}^{N \times M}$ and $\textbf{C} \in \mathbb{R}^{N \times M}$ containing the incurred cost of each counterfactual in $\mathcal{S}^{best}_{t-1}$ and $\mathcal{S}_{t}$ with respect to all the $M$ sampled cost functions $\{\mathcal{C}_i\}_{i=1}^{M}$, there always exist a  $\mathcal{S}^{best}_{t}$ constructed from $\mathcal{S}^{best}_{t-1}$ and $\mathcal{S}_t$ such that 
\begin{equation}
    \footnotesize 
    \text{EMC}(\vs_u, \mathcal{S}^{best}_{t} ; \{\mathcal{C}_i\}_{i=1}^M) \leq \text{EMC}(\vs_u, \mathcal{S}^{best}_{t-1} ; \{\mathcal{C}_i\}_{i=1}^M)   \nonumber
\end{equation}
\label{theorem}
\vspace{-15pt}
\end{theorem}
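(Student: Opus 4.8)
The plan is to prove this by explicit construction of $\mathcal{S}^{best}_{t}$, showing that the construction used by COLS (the greedy per-counterfactual replacement rule in Algorithm~\ref{alg:theorem}) can only decrease, never increase, the EMC. The essential observation is that $\text{EMC}(\vs_u, \mathcal{S}; \{\mathcal{C}_i\}_{i=1}^M) = \frac{1}{M}\sum_{i=1}^M \min_{s_j \in \mathcal{S}} \mathbf{C}_{ji}$ depends on $\mathcal{S}$ only through the column-wise minima of the cost matrix, so it suffices to control how each column minimum changes under a swap. First I would fix notation: write $m_{t-1}(i) = \min_j \mathbf{C}^b_{ji}$ for the minimum cost achieved by the old best set on sampled cost function $\mathcal{C}_i$, and observe $\text{EMC}(\mathcal{S}^{best}_{t-1}) = \frac{1}{M}\sum_i m_{t-1}(i)$.

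Next I would describe the construction. Starting from $\mathcal{S}^{best} \leftarrow \mathcal{S}^{best}_{t-1}$, we consider each candidate $s_j \in \mathcal{S}_t$ in turn and ask whether swapping it in for some $s_i \in \mathcal{S}^{best}$ yields a strictly positive ``benefit'' — where the benefit of replacing $s_i$ by $s_j$ is defined as the decrease in $\text{EMC}$ (equivalently, $\frac{1}{M}\sum_{\ell}[\min_{s \in \mathcal{S}^{best}} \mathbf{C}_{s,\ell} - \min_{s \in (\mathcal{S}^{best}\setminus\{s_i\})\cup\{s_j\}} \mathbf{C}_{s,\ell}]$). If some swap has positive benefit we perform it; otherwise we leave the set unchanged. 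The key claim is then: $\text{EMC}$ of the running best set is non-increasing at every such step. This is immediate from the definition — a swap is only performed when its benefit is positive, i.e. when it strictly decreases $\text{EMC}$ — and if no beneficial swap exists we take $\mathcal{S}^{best}_{t} = \mathcal{S}^{best}_{t-1}$, for which equality holds trivially. Chaining the (finitely many) steps gives $\text{EMC}(\vs_u, \mathcal{S}^{best}_{t}; \{\mathcal{C}_i\}) \le \text{EMC}(\vs_u, \mathcal{S}^{best}_{t-1}; \{\mathcal{C}_i\})$, which is the claim.

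To make the argument self-contained rather than tautological, I would also verify the monotonicity of a single column minimum directly, since that is what makes the benefit well-defined and computable from $\mathbf{C}^b$ and $\mathbf{C}$ alone: for a fixed $\mathcal{C}_\ell$, replacing $s_i$ by $s_j$ changes $\min_{s}\mathbf{C}_{s,\ell}$ to $\min\{\min_{s \neq s_i}\mathbf{C}_{s,\ell},\ \mathbf{C}_{s_j,\ell}\}$; this can go up only if $s_i$ was the unique argmin, but summing over $\ell$ the net benefit is exactly what the algorithm checks, so a swap with negative net benefit is simply rejected. I would also remark that since $|\mathcal{S}_t| = N$ is finite, only finitely many candidate replacements are examined, so the ``chain'' of inequalities is finite and the conclusion holds without any limiting argument.

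The main obstacle — really the only subtlety — is making precise the claim that the greedy per-counterfactual update is \emph{guaranteed} to find a non-worsening $\mathcal{S}^{best}_t$, as opposed to merely that \emph{some} non-worsening set exists. The existential statement in the theorem ($\exists\, \mathcal{S}^{best}_t$) is in fact trivially witnessed by $\mathcal{S}^{best}_{t-1}$ itself, so the real content is that the COLS update rule realizes such a set; the proof should therefore be phrased as: (i) the algorithm's output is obtained by a sequence of swaps each of which has non-negative benefit by construction, and (ii) non-negative benefit is, by definition of benefit via the column-min formula, exactly the statement that $\text{EMC}$ does not increase. Care is needed only in bookkeeping the order of swaps (later swaps are evaluated against the already-updated running set, which is fine because the non-increasing property is preserved at each step) and in noting that rejecting all swaps is always an available option, so the worst case is equality.
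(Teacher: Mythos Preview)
Your proposal is correct and follows essentially the same constructive approach as the paper: both argue that a swap is performed only when its computed benefit (the change in EMC, tracked via the column-wise minimum and the second-best element for each sampled cost function) is positive, and otherwise one falls back to $\mathcal{S}^{best}_{t-1}$, which immediately gives the non-increase. Your remark that the bare existential statement is trivially witnessed by $\mathcal{S}^{best}_{t-1}$ itself, and that the real content is showing the COLS update realizes such a set, is a clarification the paper leaves implicit; your sequential-swap bookkeeping (re-evaluating against the already-updated set) is also slightly cleaner than the paper's batch formulation but does not change the argument.
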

For the proof of the theorem, please refer to Appendix \ref{proof}.

\textbf{P-COLS:} The P-COLS method is a variant of COLS which starts multiple parallel runs of COLS with different initial sets. With a given computational budget, each run is allocated a fraction of the budget. The run with the least objective value is selected to provide recourses for the user.

\section{Experiments}

\label{sec:experiments}
\subsection{Experimental Setup}
\label{sec:setup}

\textbf{Dataset:}
\label{setup:dataset}
We conduct our experiments on the Adult-Income \citep{adult} and COMPAS \citep{propublica_compas} datasets. The Adult-Income dataset is based on the 1994 US Census data and contains 12 features. A model’s task is to classify whether an individual's income is over $\$50,000$.
COMPAS was collected by ProPublica and contains information about the criminal history of defendants from Broward County for analyzing recidivism in the United States. The processed dataset contains 7 features. A model needs to decide bail based on predicting which of the bail applicants will recidivate in the next two years.
We preprocess both datasets based on a previous analysis where categorical features are binarized \citep{carla_neurips}.
\footnote{\label{footnote_ar} The code for the Actionable Recourse method \citep{ar_fat} requires binary categorical variables.} 
Our black-box model is a neural network model with 2-layers. Please refer to Appendix Tables \ref{tab:supp_main_model2} and \ref{tab:supp_main_logistic} for experiments with various model families and to Appendix \ref{sec:supp_setup} and Table \ref{tab:data_stats} for further experimental details and data statistics.

\vspace{-2pt}
\textbf{Baselines:}
\label{setup:baseline}
We compare our methods COLS and P-COLS with DICE \citep{dice_fat}, FACE-Knn and FACE-Epsilon \citep{face}, Actionable Recourse \citep{ar_fat}, and a Random Search which samples counterfactuals uniformly over the space to obtain the next candidate set at each iteration. Importantly, \textbf{we control for compute across methods} by restricting the number of forward passes to the black-box model, which are needed to decide if a counterfactual produces the desired class. For most big models, this is the rate-limiting step for each method. We set a fixed budget of $5000$ model queries, a set size $|\mathcal{S}| = 10$, and number of cost function sample $M=1000$ for all methods. 
For a description of the objective function and other details of these baselines refer to Appendix \ref{sec:supp_other_objective}, \ref{sec:supp_other_methods}.

\textbf{Measuring Recourse Quality:}
\label{setup:metrics}
To compare with past work, we evaluate methods on distance based metrics like feature diversity, proximity, sparsity and validity. Proximity is defined as ${\textstyle prox(\vx, \mathcal{S}) = 1 - \frac{1}{|\mathcal{S}|}\sum_{i=1}^{|\mathcal{S}|} dist(\vx, \mathcal{S}_i)}$, where $\mathcal{S}_i$ is a counterfactual. Sparsity \citep{dice_fat} is defined as ${\textstyle spar(\vx, \mathcal{S}) = 1 -\frac{1}{|\mathcal{S}|*d}\sum_{i=1}^{|\mathcal{S}|}\sum_{j=1}^{|\vx|} \mathbbm{1}_{\{x_j \neq \mathcal{S}_{ij}\}}}$. Feature diversity \citep{dice_fat} is defined as ${\textstyle div(\mathcal{S}) = \frac{1}{Z}\sum_{i=1}^{|\mathcal{S}|-1}\sum_{j=i+1}^{|\mathcal{S}|} dist(\mathcal{S}_i, \mathcal{S}_j)}$, where $Z$ is the number of terms in the double summation. Validity is defined as ${\textstyle val(Y) = \frac{|\{ \text{unique}~ s_i \in \mathcal{S} ~:~ f(s_i) = +1\}|}{|\mathcal{S}|}}$. 
These metric lie between [0, 1] and higher values are better.

Moreover, we introduce a new cost-based metric which is directly linked to user satisfaction and is computed using each user's true cost function $\mathcal{C}^*_u$. 
First, we define that a user is \emph{satisfied} by a recourse set if the best option in that set achieves a sufficiently low cost under their cost function $\mathcal{C}^*_u$. 
Satisfying users with recourses is what we principally care about.
Hence, given a set of users $\mathcal{U}$ and a set of generated recourse sets $\{\mathcal{S}_{u}\}_{u \in \mathcal{U}}$, we define the fraction of users satisfied at a satisfiability threshold $k$, FS@$k$ as:
\begin{equation} 
    \footnotesize
    FS@k(\mathcal{U}, \{\mathcal{S}_{u}\}_{u \in \mathcal{U}}) =  \frac{1}{|\mathcal{U}|}\sum_{u \in \mathcal{U}}\mathbbm{1}_{\{\text{MinCost}(\vs_{u}, \mathcal{S}_u ; \mathcal{C}^*_u) < k\}}     
    \label{eqn:FS}
\end{equation}
While in reality $k$ will vary from user to user, we keep $k$ fixed across users in our experiments because the goal of any method is to find low-cost recourses regardless of $k$, and small differences to $k$ have no effect on how methods rank-order under FS@$k$. 
We note that minimizing user-incurred cost in Eqn. \ref{eqn:main} is implicitly maximizing the probability of satisfaction of that user, which in turn increases FS@$k$.

As in past work, we also measure Population Average Cost (PAC), which is defined as $\text{PAC} = \frac{1}{|\mathcal{U}|}\sum_{u \in \mathcal{U}}\text{MinCost}(\vs_{u}, \mathcal{S}_u ; \mathcal{C}^*_u)$. But we note that average cost cannot be used to assess individual users' satisfaction as average for a group can be low while most users might have high cost and therefore be dissatisfied. 
\begin{equation} 
    \footnotesize
    Cov(\mathcal{U}, \{\mathcal{S}_{u}\}_{u \in \mathcal{U}}) = \frac{1 }{|\mathcal{U}|}\sum_{u \in \mathcal{U}}\mathbbm{1}_{\{\text{MinCost}(\vs_{u}, \mathcal{S}_u ; \mathcal{C}^*_u) < \infty\}}    
    \label{eqn:cov}
\end{equation}

\textbf{Simulated User Cost Functions:} 
Given that true user cost functions $\mathcal{C}^*_u \sim \mathcal{D}^*$ are not known in practice, for evaluation experiments we simulated them from a distribution $\mathcal{D}_{test}$. There are two phases in the experiments: (1) the recourse generation (training) phase, where we sample cost functions from $\mathcal{D}_{train}$ and optimize for EMC; (2) the evaluation (test) phase, where we use the cost function $\mathcal{C}^*_u$ which is hidden during training to compute cost-based metrics for all the recourse methods.
For experiments apart from distribution shift experiments, $\mathcal{D}_{train} = \mathcal{D}_{test} = \mathcal{D}_{mix}$ which is the most general distribution we propose.
Please note that this is analogous to supervised learning where train and test distributions are assumed to be same, since the cost samples $\{\mathcal{C}_i\}_{i=1}^M$ used in EMC are different from $\mathcal{C}^*_u$ which is used for evaluation (unlike in past works which reuse a cost function at train and test time; see Section \ref{sec:related}).

\begin{table*}[t!]
\footnotesize
\centering
\resizebox{.9\textwidth}{!}{
\begin{tabular}{ccccccccc}
\toprule
\textbf{Data}                       & \textbf{Method} & \multicolumn{7}{c}{\textbf{Metrics}}                                                                                    \\ \midrule
                                 &       & \multicolumn{3}{c}{\textbf{Cost Metrics}}  & \multicolumn{3}{c}{\textbf{Distance Metrics}}  \\ \cmidrule(lr){3-5} \cmidrule(lr){6-8}
                                 
                                 &       & \textbf{FS@1} & \textbf{PAC$(\downarrow)$} & \textbf{Cov} & \textbf{Div} & \textbf{Prox} & \textbf{Spars} & \textbf{Val} \\ \cmidrule(lr){3-9}

\multirow{7}{*}{\textbf{Adult-Income}}  & \textbf{DICE} &	2.47 &		1.37 &	8.32 &	3.90 &	65.80 &	47.20  &	97.90 \\
                                & \textbf{Face-Eps} &	15.23  &	0.76 &	22.60 &	4.75 &	\textbf{92.22} &	74.98  &	\textbf{100.0} \\
                                 & \textbf{Face-Knn} &	25.30  &	0.74 &	35.00 &	8.62 &	89.07 &	71.85  &	\textbf{100.0} \\
                                 & \textbf{Act. Recourse} &	49.93 &	0.55 &	56.85 &	18.38 &	74.68 &	73.57  &	78.67 \\
                                 & \textbf{Random} &	6.27 &	1.40 &	31.83 &	\textbf{48.30} &	55.83 &	39.85 &	95.55 \\ \cmidrule{2-9}
                                 & \textbf{COLS} &	72.57  &	\textbf{0.38} &	76.07 &	25.77 &	80.22 &	76.48  &	97.15 \\
                                 & \textbf{P-COLS} &	\textbf{75.82} & 0.40 &	\textbf{79.20} &	25.57 &	81.67 &	\textbf{78.00} &	94.78 \\
 \midrule
\multirow{7}{*}{\textbf{COMPAS}} & \textbf{DICE} &	0.40  &	0.54 &	0.40 &	11.30 &	65.00 &	32.00  &	98.90 \\
                                & \textbf{Face-Eps} &	12.20  &	0.29 &	12.20 &	2.50 &	\textbf{94.20} &	60.60 &	\textbf{100.0} \\
                                & \textbf{Face-Knn} &	12.20  &	0.29 &	12.20 &	2.60 &	94.10 &	60.60 &	\textbf{100.0} \\
                                & \textbf{Act. Recourse} &	65.80  &	0.40 &	66.60 &	11.87 &	80.53 &	\textbf{74.07} &	44.23 \\
                                & \textbf{Random} &	29.95  &	0.77 &	39.20 &	\textbf{42.22} &	55.90 &	31.25 &	71.88 \\  \cmidrule{2-9}
                                & \textbf{COLS} &	82.23 &	\textbf{0.24} &	82.23 &	29.32 &	77.82 &	70.05 &	95.48 \\
                                & \textbf{P-COLS} &	\textbf{83.73}  &	\textbf{0.24} &	\textbf{83.73} &	29.38 &	78.48 &	71.30 &	92.78 \\

\bottomrule
\end{tabular}
}
\caption{\label{tab:main} Recourse method performance across various cost and distance metrics (Section \ref{setup:metrics}). The numbers reported are averaged across $5$ different runs.
For all the metrics higher is better except for PAC where lower is better. Refer to section \ref{exp:main} for more details.\vspace{-10pt}}
\end{table*}

\begin{table}[t]
\footnotesize
\centering
\resizebox{0.9\columnwidth}{!}{
\begin{tabular}{ccccccccc}
\toprule

\textbf{Search Alg.} & \textbf{Objective} & \multicolumn{3}{c}{\textbf{Cost Metrics}}  & \multicolumn{3}{c}{\textbf{Distance Metrics}}  \\ \cmidrule(lr){3-5} \cmidrule(lr){6-8}

 & & \textbf{FS@1}  & \textbf{PAC$(\downarrow)$} & \textbf{Cov} & \textbf{Div} & \textbf{Prox} & \textbf{Spars} \\ \midrule

\textbf{LS} & \textbf{Sparsity} & 10.1      & 1.304 & 29.0  & 42.7 & 66.2  & 55.8  \\
\textbf{LS} & \textbf{Proximity}    & 9.7  & 1.275 & 27.0 & 42.1 & 67.5 & 55.0\\
\textbf{LS} & \textbf{Diversity}    & 0.0     & 2.393 & 7.6   & \textbf{53.3} & 50.8  & 35.6  \\ \midrule
\textbf{LS} & \textbf{EMC}  & 49.8  & 0.597 & 59.1  & 37.8  & 73.3  & 67.5  \\
\textbf{COLS} & \textbf{EMC}  & \textbf{68.8} & \textbf{0.391}    & \textbf{72.6} & 27.1  & \textbf{77.5} & \textbf{73.5} \\

\bottomrule 
\end{tabular}
}
\caption{\label{tab:ablation}Ablation results with Search algorithms trained on different objectives (Section \ref{exp:ablation}).
\vspace{-10pt}}
\end{table}

\subsection{Research Questions}
\label{sec:questions}
\textbf{Q1. Which Method Satisfies the Most Users?}
\label{exp:main}

In this experiment, we compare recourse methods from Section \ref{sec:setup} and Appendix \ref{sec:supp_other_methods} on distance and cost-based metrics (Section \ref{sec:setup}). We perform five runs for each method with different random seeds, reporting the mean results in Table \ref{tab:main}. We omit the variances as they were less than 0.01.
\vspace{3pt}
\\\textbf{Results:} We observe that DICE, which optimizes for a combination of distance-based metrics, performs much worse on metrics like Population Coverage (Cov) and FS which directly model user-incurred cost and satisfaction. Meanwhile, \textbf{COLS and P-COLS, which optimize for EMC, achieve 22.64\% and 25.89\% point higher user satisfaction while covering 19.28\% and 22.42\% point more users} on Adult-Income and COMPAS, respectively. 
We also observe that \textbf{COLS and P-COLS demonstrate high sparsity and proximity in the solutions}. 
Interestingly, we find that COLS and P-COLS do not exhibit high feature diversity. 
The second-best method on cost metrics, Actionable Recourse, also promotes proximate and sparse solutions rather than feature diversity, while Random Search achieves high diversity but low user satisfaction.
These results provide evidence that feature diversity among recourses might not be a necessary condition for high user satisfaction.
Hence, it is preferable to provide users with recourse based on expected cost as opposed to providing them diverse options which might not align well with their preferences.

\textbf{Q2. Is the Performance Improved by the COLS Optimization Method or by the EMC Objective?}
\label{exp:ablation}

We perform an ablation study to attribute the improvements from our method to either the COLS optimization method or the EMC objective function. To do so, we run a basic local search (LS) to optimize objectives used by other methods like feature diversity, proximity, and sparsity along with validity. We use a basic non-optimized local search, because there is no simple and efficient way to guarantee a reduction in the diversity objective by swapping out single elements from the solution set which is required for COLS (see Appendix \ref{sec:supp_method}).
We also optimize for EMC using basic local search to understand the usefulness of COLS.
\vspace{3pt}
\\\textbf{Results:}  The results in Table \ref{tab:ablation} suggest that optimizing for metrics besides EMC is sub-optimal. For proximity, sparsity, and feature diversity objectives, the FS score and population coverage is very low, while they perform well on their respective metrics. The low FS score for distance metrics is expected as they ignore user-specific preference while optimizing for their objectives, and therefore the generated recourses might be infeasible under the user's cost function. We find that EMC with LS outperforms all distance metrics on FS, which suggests that the EMC is a better objective and leads to higher user satisfaction.
Meanwhile, the \textbf{19\% point difference in the performance of EMC with LS and COLS can be attributed to our cost optimization}
described in Section \ref{sec:search_method} and Theorem  \ref{thm:monotonicity}, which allows COLS to more efficiently search the solution space. 
We observe that \textbf{sparsity and proximity are positively correlated with higher user satisfaction} and emerge from the idea of \textit{Linear} and \textit{Percentile} costs.

\textbf{Q3. Are Recourses Fair Across Subgroups?}
\label{exp:fairness}

\textbf{Design:} We want to understand whether recourse methods provide equitable solutions across subgroups based on demographic features like \textit{Gender}. We adapt existing fairness metrics for disparate impact across population subgroups \citep{di_fairness_metric} for the recourse outcomes we study, which we denote by Disparate Impact Ratio (DIR). Given a metric $\mathcal{M}$, DIR is a ratio between metric scores across two subgroups. $\text{DIR-}\mathcal{M} = \mathcal{M}\text{(S=1)}/\mathcal{M}\text{(S=0)}$. We use either Cov or FS@1 as $\mathcal{M}$. 
Under the DIR metric, the maximum fairness score that can be achieved is 1, though this might not be achievable depending on the black-box model. 
We run experiments on the Adult-Income dataset, with a budget of $5000$ model queries and $|\mathcal{S}| = 10$. 
\vspace{3pt}
\\\textbf{Results:} We present the gender based subgroup results in Table \ref{tab:disparity}. For results on racial based subgroup see Appendix Table \ref{tab:race_disparity}.
We observe that \textbf{our methods are typically more fair than baselines on both Gender and Race-based subgroups while providing recourse to a larger fraction of people in both subgroups}.
In particular, we find that our method achieves a score very close to $1$ on DIR-FS and DIR-Cov implying a very high degree of fairness.
We attribute the fairness of our method to (1) the fact that our method does not heavily depend on the data distribution, and (2) the use of a diverse set of cost functions when generating recourses.
We see condition (2) as important since there are other individualized methods that do not rely on the data distribution, such as Actionable Recourse, which can generate less fair solutions than COLS.

\begin{table}[t]
\small
\centering
\resizebox{0.75\columnwidth}{!}{
\begin{tabular}{c|ccccc}
\toprule
\textbf{Method} & \textbf{Gender} & \textbf{FS@1} & \textbf{Cov} & \textbf{DIR-FS} & \textbf{DIR-Cov} \\
\midrule

\multirow{2}{*}{\textbf{DICE}} & \textbf{F} & 0.0 &  0.0 & \multirow{2}{*}{-} & \multirow{2}{*}{-}  \\
& \textbf{M} & 4.7 &  15.6 & & \\ \midrule

\multirow{2}{*}{\textbf{Face-Eps}} & \textbf{F} & 12.5 &  22.1 & \multirow{2}{*}{1.504} & \multirow{2}{*}{1.118}  \\
& \textbf{M} & 18.8 &  24.7 & &  \\ \midrule

\multirow{2}{*}{\textbf{Face-Knn}} & \textbf{F} & 29.9 &  36.3 & \multirow{2}{*}{0.719} & \multirow{2}{*}{0.89}  \\
& \textbf{M} & 21.5 &  32.3 & & \\ \midrule

\multirow{2}{*}{\textbf{Act. Recourse}} & \textbf{F} & 53.8 &  58.7 & \multirow{2}{*}{0.881} & \multirow{2}{*}{0.959} \\
& \textbf{M} & 47.4 &  56.3 & & \\ \midrule

\multirow{2}{*}{\textbf{Random}} & \textbf{F} & 7.8 &  34.6 & \multirow{2}{*}{0.859} & \multirow{2}{*}{0.792}  \\
& \textbf{M} & 6.7 &  27.4 & & \\ \midrule

\multirow{2}{*}{\textbf{COLS}} & \textbf{F} & 72.7 &  76.2 & \multirow{2}{*}{0.994} & \multirow{2}{*}{0.992}  \\
& \textbf{M} & 72.3 &  75.6 & & \\ \midrule

\multirow{2}{*}{\textbf{P-COLS}} & \textbf{F} & 76.5 &  80.2 & \multirow{2}{*}{\textbf{1.004}} & \multirow{2}{*}{\textbf{1.0}} \\
& \textbf{M} & 76.8 &  80.2 & &  \\
\bottomrule
\end{tabular}
}

\caption{\label{tab:disparity}Fairness analysis of recourse methods for subgroups with respect to Gender and Race.
\textbf{DIR}: Disparate Impact Ratio; \textbf{M}: Male, \textbf{F}: Female (Section \ref{exp:fairness}).\vspace{-5pt}}
\end{table}

\textbf{Q4. Which Method Do Humans Prefer?}
\label{exp:human_eval}

While we can compute our linear and percentile cost functions for users, we are especially interested in whether humans would consider recourses to reasonable for our synthetic users.
We designed a small study where we provided human annotators with state vectors and preference scores $\vp$ for a sample of 100 users in the Adult-Income dataset (see Appendix \ref{sec:supp_human} for more details).
We presented the recourse generated by COLS and Actionable Recourse (strongest baseline) to the annotators while anonymizing each method's name and asked them two questions: (1) Acting as if they were the user with the provided preferences and state vector, which recourse would they prefer to adopt? (2) Does the recourse generated by each method seem reasonable to them? We collect three annotations for each sample and take a majority vote for each response; we allow for users to indicate ``no preference" between the two proposed recourses, and if there was a three-way tie in annotation we record the majority vote as ``no preference."

\begin{table}[h]
    \centering
    \begin{tabular}{c|c|c}
    \toprule
        No Preference & Actionable Recourse & Ours  \\
        \midrule
        18\% & 25\% & \textbf{57\%} \\
        \bottomrule
    \end{tabular}
    \caption{Percentage of times each method was preferred by human annotators (Fleiss kappa=0.74 and $p{=}$1e-4).\vspace{-7pt}}
    \label{tab:my_label}
\end{table}

\textbf{Results:} We found that \textbf{our method was preferred 57\% of the time, while AR was preferred only 25\% of the time}, a difference of 32 percentage points (+/- 16 points variance, Fleiss' kappa=0.74, and $p{=}$1e-4). Furthermore, human annotators found 60\% of the recourses generated by COLS to be reasonable as compared to 33\% for AR, a 27 point difference ($p{<}$1e-4). This study shows that our method is preferred by humans over the baseline.
\\ 
\\
\\
\textbf{Q5. Robustness to Distribution Shifts?}

In this experiment, we measure the effect of distribution shift between the train and test time distributions, $\mathcal{D}_{train}$ and $\mathcal{D}_{test}$. The top-left and bottom-right corners of Appendix Figure \ref{fig:alphagrid} show that our method is robust to deviations where the costs for EMC are obtained from $\mathcal{D}_{lin}$ while the true user costs are drawn from $\mathcal{D}_{per}$ (and vice versa), which is a complete distribution shift. For full experimental design and conclusions please refer to Appendix Figures \ref{fig:alphagrid} and \ref{fig:ds}.

We provide experiments for several additional research questions in the Appendix \ref{sec:supp_questions}, which we summarize here:
(1) We can make use of a larger compute budget to scale up the performance (Figure \ref{fig:budget}); (2) The recourse sets provide \textit{high quality solutions to users using as few as 3 counterfactuals} (Figure \ref{fig:num_cfs}); and 
(3) we can \textit{achieve high user satisfaction with as few as 20 Monte Carlo samples}, rather than 1000 (Figure \ref{fig:num_mcmc}). We also show some qualitative examples of recourses provided by our method in Table \ref{tab:examples}.

\section{Conclusion}
In this paper, we propose a cost-based recourse generation method which can optimize for an unknown user-specific cost function. We show that our method achieves much higher rates of user satisfaction than comparable baselines. This is particularly useful since detailed user cost function data is not readily available for most applications.
We attribute performance gains to our Expected Minimum Cost (EMC) objective term, and we also show that our discrete optimization algorithm, Cost-Optimized Local Search (COLS), produces large improvements over baseline search methods. 

\section{Ethics Statement}

We hope that our recourse method is adopted by institutions seeking to provide reasonable paths to users for achieving more favorable outcomes under the decisions of black-box machine learning models or other inscrutable models. We see this as a ``robust good," 
similar to past commentators \cite{venkatasubramanian2020philosophical}. Below, we comment on a few other ethical aspects of the algorithmic recourse problem. 

First, we suggest that fairness is an important value which recourse methods should always be evaluated along, but we note that evaluations will depend heavily on the model, training algorithm, and training data. For instance, a sufficiently biased model might not even allow for suitable recourses for certain subgroups. As a result, any recourse method will fail to identify an equitable set of solutions for the population. That said, recourse methods can still be designed to be more or less fair. This much is evident from our varying results on fairness metrics using a number of recourse methods. What will be valuable in future work is to design experiments that separate the effects on fairness of the model, training algorithm, training data, and recourse algorithm. Until then, we risk blaming the recourse algorithm for the bias of a model, or vice versa.

Additionally, there are possible dual-use risks from developing stronger recourse methods. For instance, malicious actors may use recourse methods when developing models in order to \emph{exclude} certain groups from having available recourse, which is essentially a reversal of the objective of training models for which recourse is guaranteed \citep{Ross2020LearningMF}. We view this use case as generally unlikely, but pernicious outcomes are possible. We also note that these kinds of outcomes may be difficult to detect, and actors may make bad-faith arguments about the fairness of their deployed models based on other notions of fairness (like whether or not a model has access to protected demographic features) that distract from an underlying problem in the fairness of recourses.

\section{Reproducibility Statement}
To encourage reproducibility, we provide our source code, including all the data pre-processing, model training, recourse generation, and evaluation metric scripts at \url{https://github.com/prateeky2806/EMC-COLS-recourse}. The details about the datasets and the pre-processing is provided in Appendix \ref{sec:supp_datasets}. We also provide clear and concise Algorithms \ref{alg:sampling}, \ref{alg:percentile}, \ref{alg:linear} for our cost sampling procedures and our optimization method COLS in Algorithm \ref{alg:cols}.

\section*{Acknowledgements}
We thank Xiang Zhou, Shiyue Zhang, Yixin Nie, Adyasha Maharana, Swarnadeep Saha, Archiki Prasad and Jaemin Cho for feedback on this paper. 
This work was supported by NSF-CAREER Award 1846185, DARPA Machine-Commonsense (MCS) Grant N66001-19-2-4031, Royster Society PhD Fellowship, Microsoft Investigator Fellowship, and Google and AWS cloud compute awards. The views contained in this article are those of the authors and not of the funding agency.

\bibliography{iclr2022_bib}
\bibliographystyle{iclr2022_bst}

\newpage
\appendix

\section{Appendix - Additional Experiments and Details}

\subsection{Experimental Setup}
\label{sec:supp_setup}

\begin{table}[t]
\small
\centering
\resizebox{.85\textwidth}{!}{
\begin{tabular}{l|cccc}
\toprule

 & Adult-Income Binary & COMPAS Binary & Adult-Income & COMPAS \\ \midrule
\# Continuous features & 3 & 4 & 2 & 3\\
\# Categorical features & 9 & 3 & 10 & 12 \\
Undesired class & $\leq$ 50k & Will Recidivate & $\leq$ 50k  & Will Recidivate \\
Desired class & $>$ 50k & Won't Recidivate & $>$ 50k  & Won't Recidivate \\
Train/val/test & 20088/2338/749 & 1415/229/491 & 13172/1569/748 & 5491/705/444 \\
Model Type & ANN(2, 20) & ANN(2, 20) & ANN(2, 20) & ANN(2, 20) \\
Val Accuracy & 82\% & 69\% & 81\% & 61\% \\

\bottomrule 
\end{tabular}
}
\caption{\label{tab:data_stats}Table containing data statistics and black-box model details. The binary version of the datasets are take from \citep{carla_neurips} whereas the non-binary version are taken from \citep{dice_fat}.}
\end{table}

\subsubsection{Datasets and Black-Box Model} 
\label{sec:supp_datasets}

In our experiments, we have two versions of the dataset, one with binary categorical features, whereas the other with non-binary categorical features. In the main paper, we show results on the binarized version (Table \ref{tab:main}) as an important baseline, Actionable Recourse \citep{ar_fat}, operates with binary categorical features.\footnote{The binary datasets can be downloaded from \href{https://github.com/carla-recourse/cf-data}{https://github.com/carla-recourse/cf-data}, whereas the non-binary data can be found on \href{https://github.com/interpretml/DiCE}{https://github.com/interpretml/DiCE}.} The data statistics for all the datasets can be found in Table \ref{tab:data_stats}. In our experiments, for all the datasets, the features gender and race are considered to be immutable \citep{dice_fat}, since we perform subgroup analysis with these variables that would be rendered meaningless if users could switch subgroups. 
Other features can either be mutable or conditionally mutable depending on semantics. These constraints can be incorporated into the methods by providing a schema of feature mutability criterion. Our black-box model is a multi-layer perceptron model with 2 hidden layers trained on the trained set and validated on the dev set. The accuracy numbers are shown in Table \ref{tab:data_stats}. The test set which is used in the counterfactual generation experiments only contains users which are classified to the undesired class by the trained black-box model. Note that our frameworks can operate with any type of model, the only requirement is the ability to query the model for outcome given a user's state vector.

\subsubsection{Computational Complexity:} 

COLS is a local search-based method and runs for $\mathcal{O}(\frac{B}{|S|})$ iterations for each user to generate the recourse set, where B is the budget (see section 5.1 - Baselines). The complexity of the cost optimization step in COLS is $\mathcal{O}(|S|^2 *M)$ per iteration. Values of $|S|$ and M as low as 3 and 10 respectively work well in practice (see Appendix B.2 and Figure \ref{fig:num_cfs}, \ref{fig:num_mcmc}). Finally for the current implementation the wall clock time on the adult dataset for each user with $|S|$ = 10, M = 100, B = 5000 setting is COLS = 20s, Random = 7.5s, DICE = 7.5s, AR = 11s, Face-knn = 7s, Face-Eps = 6s (can be parallelized across users). Cost function samples can be pre-computed once and saved for all experiments, this typically takes a few minutes ($<$ 5 min) across all users.

\begin{figure}[t]
    \centering
    \begin{minipage}{0.47\textwidth}
        \centering
        \includegraphics[width=\textwidth]{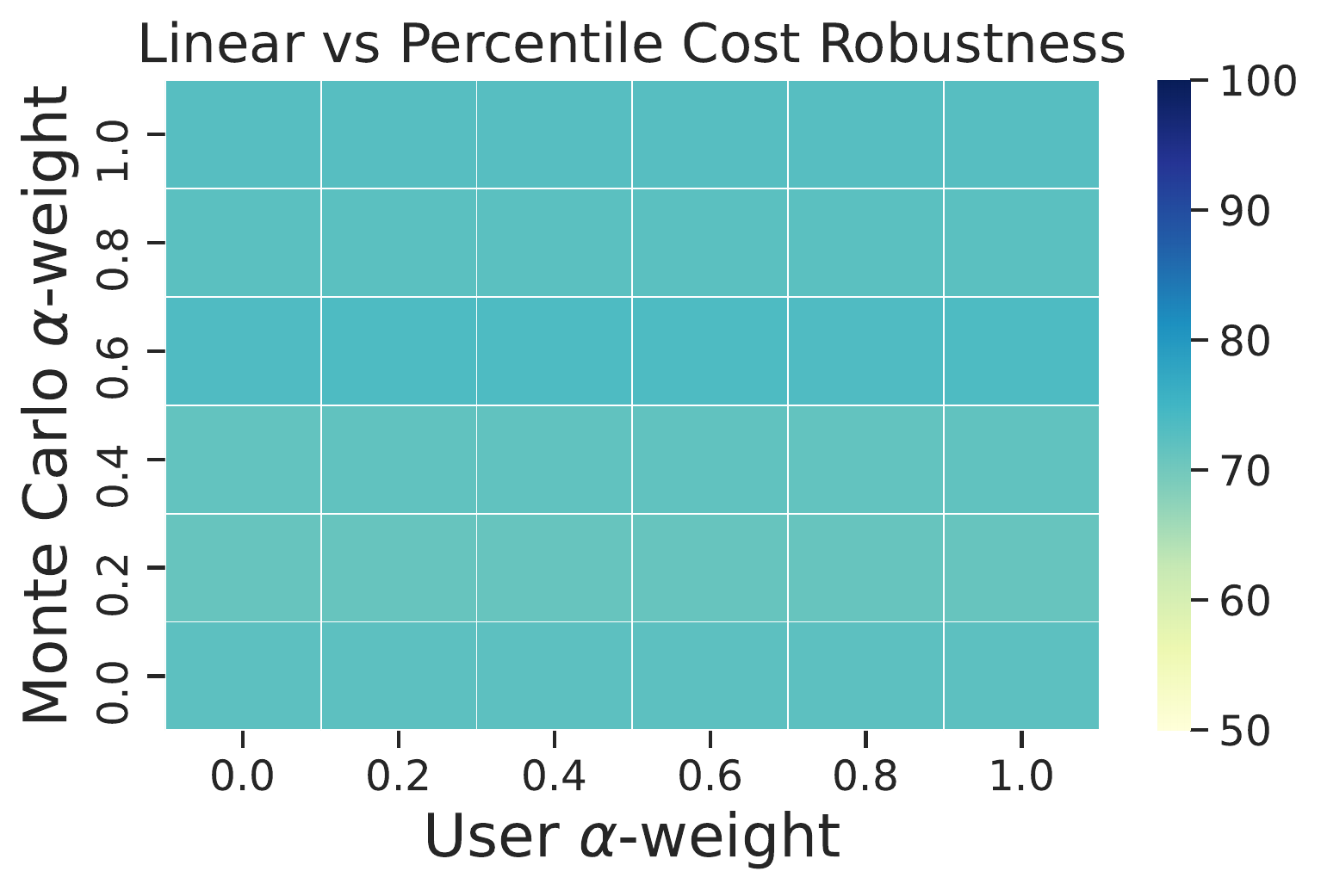}
        \caption{\label{fig:alphagrid}This figure shows the performance of the method on FS@$k$ when recourses are generated with Monte Carlo cost samples from a distribution with $\alpha$-weight varying between 0 and 1, where the user costs follow different $\alpha$-weight values varying between 0, 1. Performance is robust to misspecification of $\alpha$. Refer to Section \ref{exp:alpha_robust} for more details.}
    \end{minipage}
     \begin{minipage}{.47\textwidth}
        \centering
        \includegraphics[width=\textwidth]{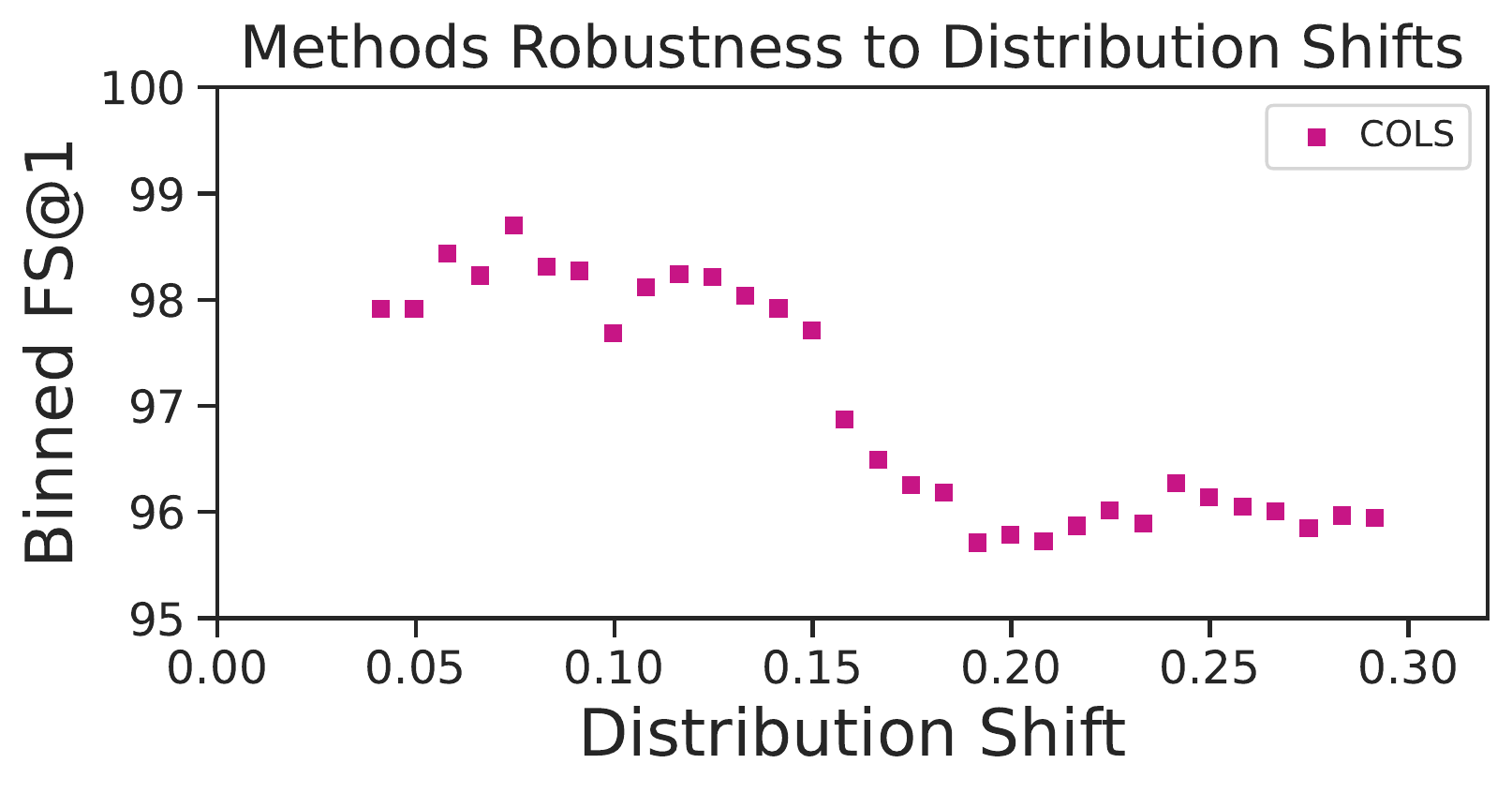}
        \caption{\label{fig:ds}In this plot we show the fraction of users satisfied vs the distance between the train and test distributions. The results demonstrate that as the distance increases the performance drops a bit and then plateaus, which means that the method is robust to this kind of distribution shift. Please refer to Section \ref{exp:ds} for more details. }
    \end{minipage}\hfill
    
\end{figure}

\subsubsection{Recourse Generation and Evaluation Pipeline}
\label{sec:supp_design}
To approximate the expectation in equation \ref{eqn:obj}, our algorithm samples a set of random cost functions $\{\mathcal{C}_i\}_{i=1}^M \sim \mathcal{D}_{train}$, which are used at the generation time to optimize for the user's hidden cost function. In the generation phase, we use Equation \ref{eqn:mcmc} as our objective. Note that, this objective promotes that the generated counterfactual set contains at least one good counterfactual for each of the cost samples, hence this set satisfies a large variety of samples from $\mathcal{D}_{train}$. This is achieved via minimizing the mean of the minimum cost incurred for each of the Monte Carlo samples \citep{monte_carlo}. Equivalently, the objective is minimized by a set of counterfactuals $\mathcal{S}$ where for each cost function there exists an element in $\mathcal{S}$ which incurs the least possible cost. In practice the size of set $\mathcal{S}$ is restricted, hence we may not achieve the absolute minimum cost but the objective tries to ensure that the counterfactuals which belong to the set have a low cost at least with respect to one Monte Carlo cost sample. The generation phase outputs a set of counterfactuals $\mathcal{S}$ which is to be provided to the users as recourse options. Given this set $\mathcal{S}_u$, in the evaluation phase, we use the users simulated cost functions which are hidden in the generation phase, to compute the cost incurred by the user $\text{MinCost}(\vs_{u}, \mathcal{S} ; \mathcal{C}^*_u)$ and calculate the metrics defined in the Section \ref{sec:setup}.

\begin{remark}
    In the limiting case, as $|\mathcal{S}| \rightarrow M$ and $\text{Budget} \rightarrow \infty$,  RHS of equation \ref{eqn:mcmc}; $\min\limits_{\mathcal{S}} \text{MinCost}(\vs_u, \mathcal{S} ; \mathcal{C}_i) \rightarrow \text{optimum cost}$, $\forall ~ i \in [M]$, if feasible solutions are possible for all cost functions samples $\{\mathcal{C}_i\}_{i=1}^M$. 
    \label{rem:limit}
\end{remark}
What this remark implies is that, as the set size increases to the number of Monte Carlo samples $M$, then the methods can trivially generate one counterfactual for each cost function sample which minimizes its cost. In that limiting case, we get custom counterfactual for each cost function. Hence, in the limiting case we can find optimum solution. In the experiment \ref{exp:cfs}, we show that even with very less number of counterfactual we are able to achieve good performance implying that in practice we do not need large set size.

\subsubsection{Details of Human Evaluation}
\label{sec:supp_human}
For our human evaluation experiments, we had three undergraduate research assistants with a background in computer science. They were provided with a set of instructions on how to interpret and perform the task. Specifically, in virtual meetings, we provided them with an overview of the dataset along with the feature descriptions, a description of the task, and an overview of the recourse generation problem. Before testing, we conducted a small understanding quiz including example problems, and we corrected any misunderstandings of the study procedure.
For each data point, they were asked to assume that they were a hypothetical user with the given state vector and preference scores in the sample and then were provided with the recourses generated by our method and Actionable Recourse \cite{ar_fat} (in a blind format with randomized ordering). In total, we collected three annotations each for 100 samples from the Adult-Income dataset.

\subsection{Additional Research Questions}
\label{sec:supp_questions}

\textbf{Q5. Robustness to Misspecification in population's true and proposed cost function distribution?}
\label{exp:alpha_robust}

\textbf{Design:} Our $\mathcal{D}_{mix}$ distribution samples cost by taking an $\alpha$-weighted combination of linear and percentile costs. These two cost have different underlying assumptions about the how users view the cost of transition between the states. We want to test the robustness of our method in terms of misspecification in users disposition to these types of cost. We perform a robustness analysis where the users cost function has a different $\alpha$ mixing weight as compared to the Monte Carlo samples we use to optimize for EMC. This creates a distribution shift in the user cost function distribution ($\mathcal{D}_{test}$) and the Monte Carlo sampling distribution ($\mathcal{D}_{train}$) used in EMC. We vary the user and Monte Carlo distributions $\alpha$-weights within the range of 0 to 1 in steps of 0.2. At the extremes values of $\alpha = 0, 1$, the shifts are very drastic as the underlying distribution changes completely. In the case when monte carlo $\alpha$ weight is 0 and user $\alpha$ weight is 1 then $\mathcal{D}_{train} = \mathcal{D}_{perc}$ and $\mathcal{D}_{test} = \mathcal{D}_{lin}$, simlarly for the other case we get $\mathcal{D}_{train} = \mathcal{D}_{lin}$ and $\mathcal{D}_{test} = \mathcal{D}_{perc}$. Please note that the distribution $\mathcal{D}_{lin}$ and $\mathcal{D}_{perc}$ have completely different underlying principles and are two completely different distributions. Hence, the corners of the heatmap represent drastic distribution shifts. 

\textbf{Results:} In Figure \ref{fig:alphagrid}, we show a heatmap plot to which demonstrates the robustness of our method. The color of the block corresponding to Monte Carlo alpha, $\alpha_{mc} = x$ and the users alpha, $\alpha_{user} = y$ represents the fraction of users that were satisfied when $\alpha_{mc} = x$ and $\alpha_{user} = y$. This means that if the user thought of costs only in terms of Linear step involved but the recourse method used samples with only percentile based cost, still the recourse set can satisfy almost the same number of users.
In Figure 3, the corners correspond to these extreme cases described above, the user satisfaction for the top left corner ($\mathcal{D}_{train} = \mathcal{D}_{perc}$ and $\mathcal{D}_{test} = \mathcal{D}_{lin}$) is similar to the bottom left corner ($\mathcal{D}_{train} = \mathcal{D}_{lin}$ and $\mathcal{D}_{test} = \mathcal{D}_{lin}$). Similarly things happen for the opposite case which is denoted by the top-right ($\mathcal{D}_{train} = \mathcal{D}_{perc}$ and $\mathcal{D}_{test} = \mathcal{D}_{perc}$) and bottom-right ($\mathcal{D}_{train} = \mathcal{D}_{lin}$ and $\mathcal{D}_{test} = \mathcal{D}_{perc}$) corners. This means that even when a complete distribution shift occurs the performance user satisfaction remains similar. This can be attributed to the hierarchical step for user preference sampling in the procedure because the preferences values can be arbitrary and they scale the raw percentile and linear cost hence the distribution designed this way to model extremely diverse types of transition costs. 

This means that \textbf{our methods is robust to misspecification in the train and test distributions.} The almost consistent color of the grid means that \textbf{there is very slight variation in the Fraction of Satisfied users when the model is tested on out of distribution user cost types.} \\

\begin{figure}[t]
    \centering
    \begin{minipage}{0.47\textwidth}
        \centering
        \includegraphics[width=\columnwidth]{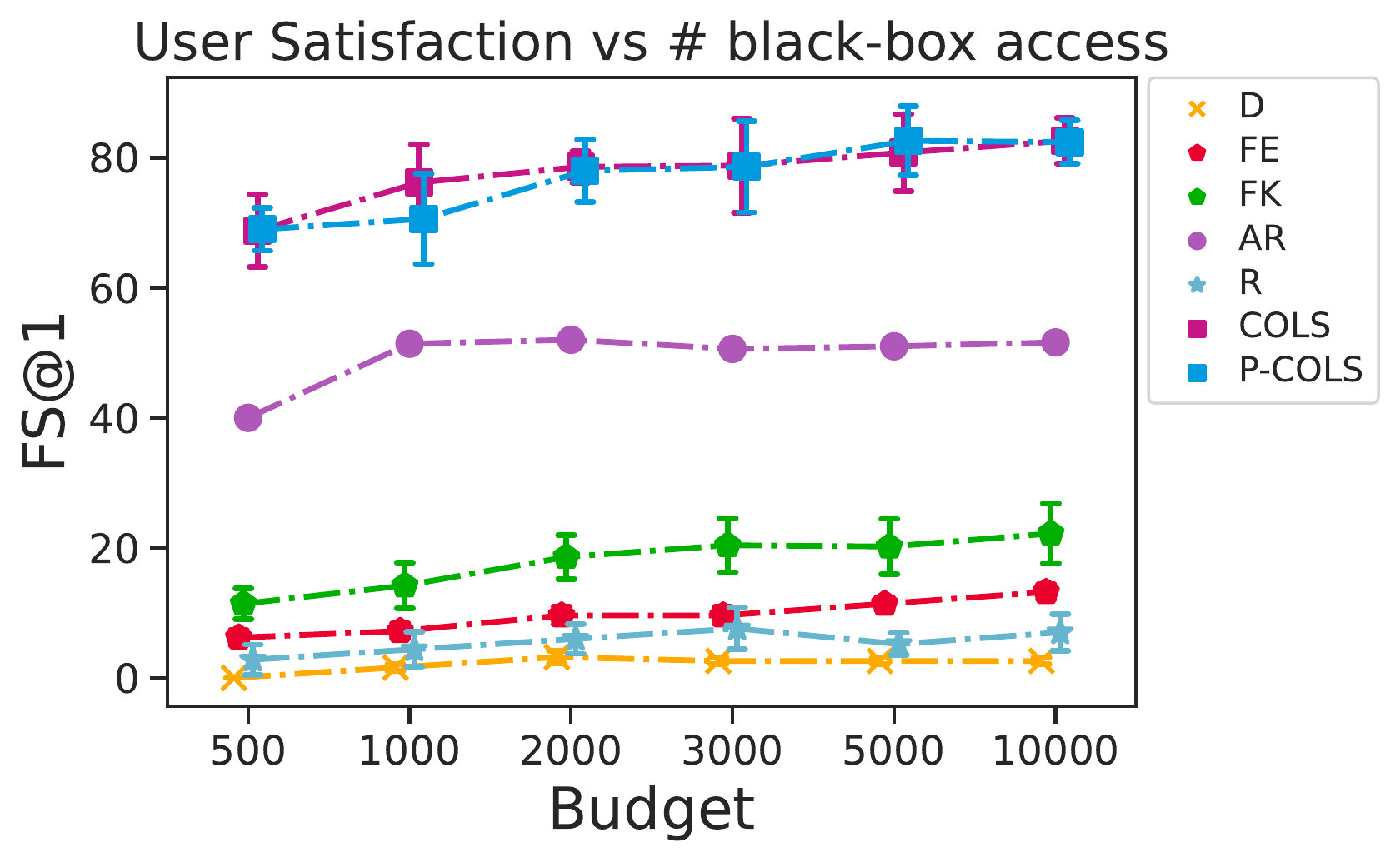}
        \caption{\label{fig:budget}Figure showing the performance of different recourse methods as the Budget is increased. These are the average number across 5 different runs along with the standard deviation error bars. For some methods the standard deviation is very low hence not visible as bars in the plot. It can be seen that as the budget increases the performance of COLS and P-COLS increases. Please refer to Section \ref{exp:budget} for more details.}
    \end{minipage}\hfill
    \begin{minipage}{0.47\textwidth}
        \centering
        \includegraphics[width=\columnwidth]{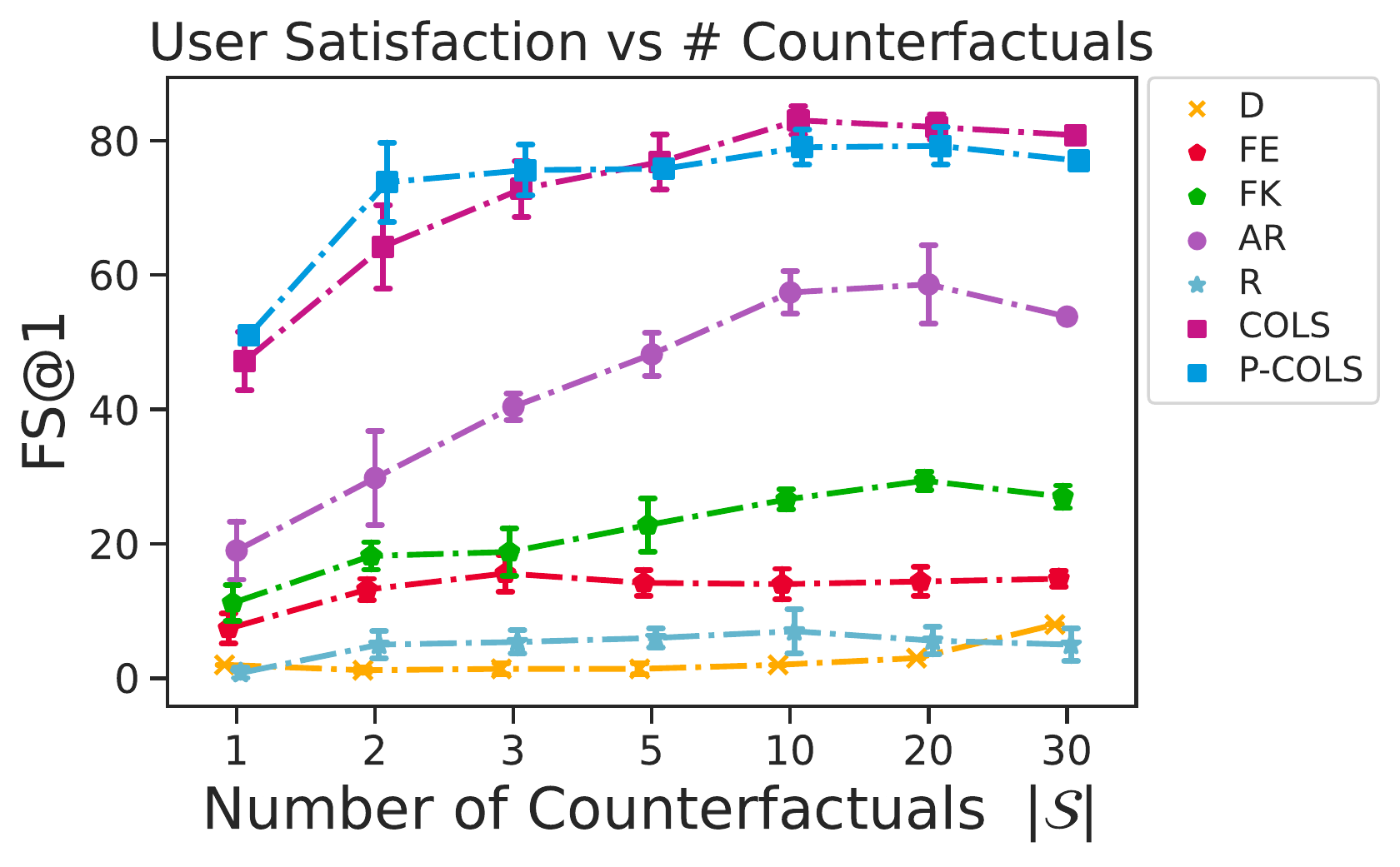}
        \caption{\label{fig:num_cfs}Figure showing the performance of different recourse methods as the the number of counterfacuals to be generated is increased. These are the average number across 5 different runs along with standard deviation error bars. We see that there is a monotonic increase in the fraction of users satisfied as the size of the set increases. We also observe that most of the performance can be obtained with a small set size. Please refer to Section \ref{exp:cfs} for more details.}
    \end{minipage}
\end{figure}

\textbf{Q6. Are Solutions Robust to Misspecified Cost Distributions?}
\label{exp:ds}

\textbf{Design:} In our cost sampling procedure, we make minimal assumptions about the user's feature preferences if they are not provided by the user. When finding recourses, we select a random subset of features along with their preference score for each user. 
However, there are situations where user preferences may be relatively homogeneous for certain features where people usually share common preferences. For example, to increase their \textit{income}, many users might prefer to edit their \textit{occupation type} or \textit{education level} rather than their \textit{work hours} or \textit{marital status}. 
Given the possibility of this kind of distribution shift in feature preferences, we want to measure how robust our method is to distribution shift between our sampling distribution and the actual cost distribution followed by users.

In this experiment, we test a case of this kind of distribution shift over cost functions.
For users in the Adult-Income data, we generate recourse sets using Monte Carlo samples from our standard distribution $\mathcal{D}_{mix}$ (Algorithm \ref{alg:sampling}). 
To obtain hidden user cost functions that differ from this distribution, we first generate $500$ different feature subsets indicating which features are editable, where each subset corresponds to a binary vector $concentration$ representing a user having specific preferences for some features over others (see Sec. \ref{sec:sampling} and Alg. \ref{alg:sampling}). Since having different editable features induces a different distribution over cost functions, we obtain a measure of distribution shift for each of the $500$ $concentration$ vectors by taking an $l_2$ distance between the vector and \textit{its nearest neighbor in the space of {\normalfont{concentration}} vectors used to generate the recourses}. 
We use the nearest neighbor because the most outlying $concentration$ vectors are least likely to be satisfied by the recourse set. In other words, the likelihood that a user is satisfied depends on the minimum distance between their $concentration$ vector and its nearest neighbor in the cost samples used at recourse generation time. Therefore, when the minimum distance increases, there is a greater distribution shift between the user's cost functions and those obtained from $\mathcal{D}_{mix}$.
Finally, we measure how many users are satisfied for a given degree of distribution shift.

\textbf{Results:} In Figure \ref{fig:ds}, we show a binned plot of FS@1 against our measure of distribution shift. We observe that as the distance between the distributions increases, the fraction of users satisfied decreases slightly and then plateaus. Even at the maximum distance we obtain, performance has only dropped about 3 points. This implies that \textbf{our method is robust to distribution shift in the cost distribution in terms of which features people prefer to edit.}
We attribute this result to the fact that our method (1) assumes random feature preferences which subsumes these skewed preferences and (2) provides multiple recourse options, each of which can cater to different kinds of preferences. As a result, we achieve a good covering of the cost function space (see experiments with respect to varying recourse set size and number of sampled cost functions in the Appendix \ref{sec:supp_questions}).

\begin{minipage}{\textwidth}
\begin{minipage}[b]{0.49\textwidth}
\small
\centering
\resizebox{1\columnwidth}{!}{
\begin{tabular}{c|ccccc}
\toprule
\textbf{Method} & \textbf{Race} & \textbf{FS@1} & \textbf{Cov} & \textbf{DIR-FS} & \textbf{DIR-Cov} \\
\midrule

\multirow{2}{*}{\textbf{DICE}} & \textbf{NW} & 0.0 & 0.0 & \multirow{2}{*}{-} & \multirow{2}{*}{-} \\
& \textbf{W} & 3.1 & 10.4 & & \\ \midrule

\multirow{2}{*}{\textbf{Face-Eps}} & \textbf{NW} & 7.7 & 12.7 & \multirow{2}{*}{2.312} & \multirow{2}{*}{2.047} \\
& \textbf{W} & 17.8 & 26.0 & & \\ \midrule

\multirow{2}{*}{\textbf{Face-Knn}} & \textbf{NW} & 12.7 & 25.4 & \multirow{2}{*}{2.228} & \multirow{2}{*}{1.425} \\
& \textbf{W} & 28.3 & 36.2 & & \\ \midrule

\multirow{2}{*}{\textbf{Act. Recourse}} & \textbf{NW} & 46.5 & 54.9 & \multirow{2}{*}{1.101} & \multirow{2}{*}{\textbf{1.056}} \\
& \textbf{W} & 51.2 & 58.0 & & \\ \midrule

\multirow{2}{*}{\textbf{Random}} & \textbf{NW} & 4.9 & 28.9 & \multirow{2}{*}{1.571} & \multirow{2}{*}{1.076} \\
& \textbf{W} & 7.7 & 31.1 & & \\ \midrule

\multirow{2}{*}{\textbf{COLS}} & \textbf{NW} & 67.6 & 71.1 & \multirow{2}{*}{1.089} & \multirow{2}{*}{1.082} \\
& \textbf{W} & 73.6 & 76.9 & & \\ \midrule

\multirow{2}{*}{\textbf{P-COLS}} & \textbf{NW} & 72.5 & 74.6 & \multirow{2}{*}{\textbf{1.07}} & \multirow{2}{*}{1.092} \\
& \textbf{W} & 77.6 & 81.5 & & \\
\bottomrule
\end{tabular}
}
\captionof{table}{\label{tab:race_disparity}Fairness analysis of recourse methods for subgroups with respect to Race. 
\textbf{DIR}: Disparate Impact Ratio; \textbf{W}: White, \textbf{NW}: Non-White (Section \ref{exp:fairness}).}
\end{minipage}
\hfill
\begin{minipage}[b]{0.49\textwidth}
\centering
\includegraphics[width=\columnwidth]{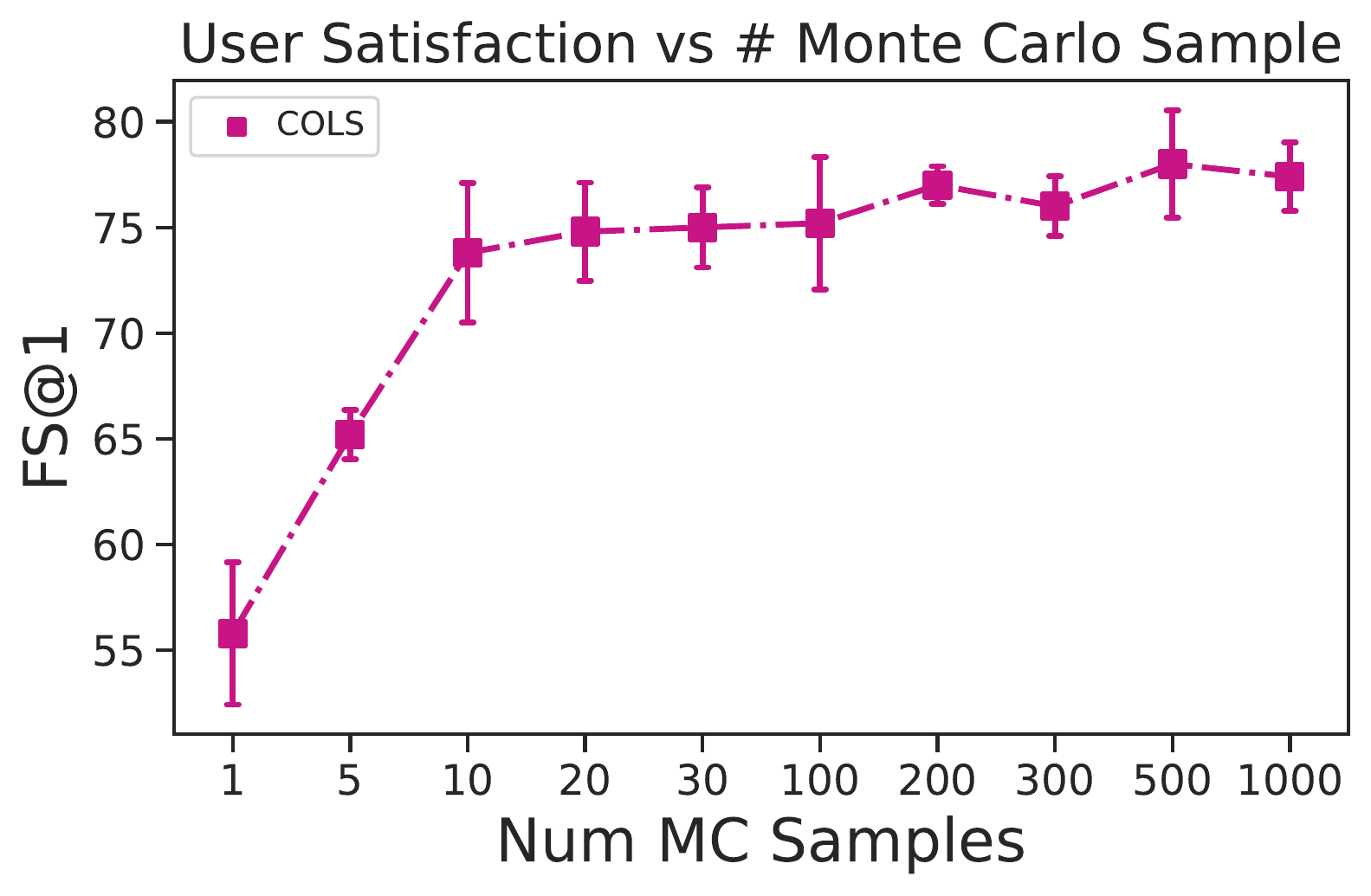}
\captionof{figure}{\label{fig:num_mcmc}Figure showing the performance of the COLS method as the number of Monte Carlo samples increase. These are the average number across 5 different runs along with standard deviation error bars. There is a steep increase and then the performance saturates. This implies that in practice we do not need a large number of samples to converge to the higher user satisfaction. Refer to Section \ref{exp:mc} for more details.}
\end{minipage}
\end{minipage}

\textbf{Q7. Does Method Performance Scale with Available Compute?}
\label{exp:budget}

\textbf{Design:} In this experiment on the Adult-Income dataset, we measure the change in performance of all the models as the number of access to the black-box model (budget) increases. Ideally, a good recourse method should be able to exploit these extra queries and use it to satisfy more users. We vary the allocated budget in the set $\{500, 1000, 2000, 3000, 5000, 10000\}$ and report the FS@1. We run the experiment on a random subset of $100$ users for $5$ independent runs and then report the average performance with standard deviation-based error bars in Figure \ref{fig:budget}. 

\textbf{Results:} In Figure \ref{fig:budget}, we can see that \textbf{as the allocated budget increases the performance of the COLS and P-COLS increases} and then saturates. This suggests that our method can exploit the additional black-box access to improve the performance. Other methods like AR and Face-Knn also show performance improvement but our method COLS and P-COLS consistently upper-bound their performance. \textbf{Our method satisfies approximately 70\% of the user with a small budget of 500} and quickly starts to saturate around a budget of $1000$. This suggests that \textbf{our methods are suitable even under tight budget constraints} as they can achieve good performance rapidly. For example, in a real-world scenario where the recourse method is deployed and has to cater to a large population, in such cases there might be budget constraints imposed onto the method where achieving good quality solution quickly is required. Lastly, for DICE and Random search the performance on the FS@1 increase by a very small margin and then stays constant as these methods are trying to optimize for different objectives which don't align well with user satisfaction as demonstrated in Section \ref{exp:main}.

\begin{table}[t]
\small
\centering
\resizebox{.85\textwidth}{!}{
\begin{tabular}{l|llc|c|c}
\toprule

 Feature Name & State Vector & Editable Features & Preference scores & Recourses & Cost \\ \midrule
 
 Age & 24 & No & 0 &  \multirow{4}{*}{$\begin{pmatrix} \text{Capital Loss: 0} \rightarrow \text{1} \end{pmatrix}$} & \multirow{4}{*}{0.009} \\
 Workclass & Private & No & 0 &  & \\
 Education-Num & 10 & No & 0 &  & \\
 Martial-Status & Married & No & 0 &  & \\
 Occupation & Other & Yes & 0.055 &\multirow{4}{*}{$\begin{pmatrix} \text{Occupation: Other} \rightarrow \text{Manager} \end{pmatrix}$} & \multirow{4}{*}{0.378} \\
 Relationship & Husband & No & 0 &  & \\
 Race & White & No & 0 &  & \\
 Gender & Male & No & 0 &  & \\
 Capital Gain & 0 & No & 0 & \multirow{4}{*}{$\begin{pmatrix} \text{Occupation: Other} \rightarrow \text{Manager}\\ \text{Capital Loss: 0} \rightarrow \text{1} \end{pmatrix}$} & \multirow{4}{*}{0.387} \\
 Capital Loss & 0 & Yes & 0.944 & & \\
 \# Work Hours & 40 & No & 0 &  & \\
 Country & US & No & 0 &  & \\ \midrule
 
  Age & 45 & No & 0 & \multirow{3}{*}{$\begin{pmatrix} \text{Capital Loss: 0} \rightarrow \text{1} \end{pmatrix}$} & \multirow{4}{*}{0.071} \\
 Workclass & Private & No & 0 &  & \\
 Education-Num & 7 & Yes & 0.537 &  & \\
 Martial-Status & Married & No & 0 & \multirow{3}{*}{$\begin{pmatrix} \text{Capital Gain: 0} \rightarrow \text{1} \end{pmatrix}$} & \multirow{4}{*}{0.106} \\
 Occupation & Other & No & 0 & & \\
 Relationship & Non-Husband & No & 0 &  & \\
 Race & White & No & 0 & \multirow{3}{*}{$\begin{pmatrix} \text{Education-Num: 7} \rightarrow \text{10}\end{pmatrix}$} & \multirow{4}{*}{0.187} \\
 Gender & Female & No & 0 &  & \\
 Capital Gain & 0 & Yes & 0.078 &  \\
 Capital Loss & 0 & Yes & 0.240 & \multirow{3}{*}{$\begin{pmatrix} \text{\# Work Hours: 32} \rightarrow \text{70} \end{pmatrix}$} & \multirow{4}{*}{0.695} \\
 \# Work Hours & 32 & Yes & 0.142 &  & \\
 Country & US & No & 0 &  & \\

\bottomrule 
\end{tabular}
}
\caption{\label{tab:examples}Table providing qualitative examples for two users from the dataset. We show each users state vector, the features that user is willing to edit, the preference scores for those editable features, the recourses provided and the cost of the generated recourses. In the first example we see that user highly prefers the feature \textit{capital loss} and the recourse which suggests edit to that has the lowest cost for the user. Whereas, the recourse which makes changes to both \textit{Occupation} and \textit{Capital Loss} has the highest cost as its changing multiple features. For the second user, we see that the most preferred feature is \textit{Education-Num} but the changes suggested in the recourse requires three steps 7-8-9-10, hence the cost for that recourse is not the lowest but still relatively low. Whereas, the recourse suggesting smaller changes to \textit{Capital Loss} which is the second most preferred feature has the lowest cost for the user.}
\end{table}

\textbf{Q8. Does providing more options to users help?}
\label{exp:cfs}

\textbf{Design:}
In this experiment, we measure the effect of having flexibility to provide the user with more options, i.e. a bigger set $\mathcal{S}$. The question here is that can the methods effectively exploit this advantage and provide lower cost solution sets to the user such that the overall user satisfaction is improved. In this experiment on the Adult-Income dataset, we take a random subset of $100$ users and fix the budget to $5000$, Monte Carlo cost sample is set to $1000$ and then vary the size of the set $\mathcal{S}$ in the set $\{1,2,3,5,10,20,30\}$. We restrict the size of the set to a maximum of $30$ as beyond a point it becomes hard for users to evaluate all the recourse options and decide which one to act upon. We run $5$ independent runs for all the data points and plot the mean performance along with standard deviation error bars. In Figure \ref{fig:num_cfs}, we plot the fraction of users satisfied @1 as the size of set $\mathcal{S}$ is increased. 

\textbf{Result:} We observed that \textbf{COLS and P-COLS monotonically increase the FS@1 metric as $|\mathcal{S}|$ increases} from $1$ to $30$. This is consistent with the intuition behind our methods (See Figure \ref{fig:diagram}, section \ref{sec:approx_objective}, \ref{sec:supp_design} for more details). It is a fundamental property of our objective that as $|\mathcal{S}|$ increases towards $M$ which is $1000$ in this case, then the quality of the solution set should increase and reach the best possible value that can be provided under the user's cost function. We note empirically that \textbf{smaller set size $|\mathcal{S}|$ between 3 to 10 is enough in most practical cases} to reach close to maximum performance. Additionally, even with $|\mathcal{S}| \in \{1,2,3\}$ our methods significantly outperform all the other methods in terms of the number of users satisfied. This property is useful in real-world scenarios where the deployed recourse method can provide as little as $3$ options while still satisfying a large fraction of users. Additionally, we also see improvement in the case of AR and Face-Knn methods as $|\mathcal{S}|$ increases. Note that Randoms Search's performance doesn't change as we increase the set size because the method doesn't take local steps from the best set and samples random points from a very large space, hence it is much harder to end up with low-cost counterfactuals.

\textbf{Q9. Does increase the number of Monte Carlo samples help with user satisfaction?}
\label{exp:mc}

\textbf{Design:} In this experiment, we want to demonstrate the effect of increasing the number of Monte Carlo samples on the performance of our COLS method. We take a random subset of $100$ users, a budget of $5000$, $|\mathcal{S}| = 10$. We vary the number of Monte Carlo samples (M) in the set $\{1, 5, 10, 20, 30, 100, 200, 300, 500, 1000\}$ and compute the user satisfaction. We ran 5 different runs with different Monte Carlo samples and show the average FS@1 along with the standard deviation in the Figure \ref{fig:num_mcmc}.\\
\textbf{Results:} We observe that \textbf{as the number of Monte Carlo samples increases, the performance of the method on the FS@1 metric monotonically increases.} This supports the intuition underlying our method (see Figure \ref{fig:diagram}). That is, given a user with a cost function $\mathcal{C}_u$ as we get more and more samples from the cost distribution $\mathcal{D}_u$ the probability of having a cost sample similar to $\mathcal{C}_u$ increases and hence the fraction of satisfied users increase. It is important to note that \textbf{empirically the method's performance approaches maximum user satisfaction with as low as 20 Monte Carlo samples.} In real-world scenarios, where the deployed model is catering to a large population this can lead to small recourse generation time, hence making it more practical.

\begin{table}[t]
\footnotesize
\centering
\resizebox{0.8\textwidth}{!}{
\begin{tabular}{ccccccccc}
\toprule
\textbf{Data}                       & \textbf{Method} & \multicolumn{7}{c}{\textbf{Metrics}}                                                                                    \\ \midrule
                                 &       & \multicolumn{3}{c}{\textbf{Cost Metrics}}  & \multicolumn{3}{c}{\textbf{Distance Metrics}}  \\ \cmidrule(lr){3-5} \cmidrule(lr){6-8}
                                 
                                 &       & \textbf{FS@1} & \textbf{PAC$(\downarrow)$} & \textbf{Cov} & \textbf{Div} & \textbf{Prox} & \textbf{Spars} & \textbf{Val} \\ \cmidrule(lr){3-9}

\multirow{4}{*}{\textbf{Adult-Income - NB}} &  \textbf{DICE}   & 6.28 & 1.45  & 27.01  & 53.01  & 57.02  & 47.80  & 86.20 \\
 &  \textbf{Random}   & 0.08   & 2.42  & 17.41  & \textbf{70.35}  & 33.32  & 22.45  & 75.71 \\ \cmidrule{2-9}
&  \textbf{COLS}    & 72.67    & \textbf{0.36}  & \textbf{74.60}  & 29.27  & \textbf{79.06}  & \textbf{76.64}  & \textbf{97.85} \\
&  \textbf{P-COLS}  & \textbf{70.03}    & 0.39  & 72.81  & 29.85  & 78.45  & 76.29  & 92.30 \\ \midrule
\multirow{4}{*}{\textbf{COMPAS - NB}} &  \textbf{DICE}   & 14.86    & 1.02  & 25.45  & 27.88  & 82.38  & 69.44  & \textbf{99.86} \\
&  \textbf{Random}   & 1.31    & 1.87  & 21.76  & \textbf{49.07}  & 54.10  & 42.34  & 67.82 \\ \cmidrule{2-9}
 &  \textbf{COLS}    & 67.34    & \textbf{0.31}  & 68.11  & 20.53  & 85.47  & 82.34  & 95.97 \\
 &  \textbf{P-COLS}  & \textbf{70.86}    & 0.35  & \textbf{72.03}  & 21.03  & \textbf{85.48}  & \textbf{82.88}  & 91.93 \\

\bottomrule
\end{tabular}
}
\caption{\label{tab:supp_main}Table comparing different recourse methods across various cost and distance metrics on Non-Binary versions of the datasets (Section \ref{sec:supp_datasets}).The numbers reported are averaged across $5$ different runs. Variance values have been as 89\% of them were lower than 0.05, with the maximum being 0.86. FS@1: Fraction of users satisfied at $k=1$. PAC: Population Average Cost. Cov: Population Coverage. For all the metrics higher is better except for PAC where lower is better.}
\end{table}

\textbf{Q10. Qualitative examples of the recourses generated for some of the users.}

In Table \ref{tab:examples}, we show a few examples of users along with their state vector, their editable features, their preference scores along with the recourses provided to them and their cost. 

\textbf{Q11. Comparison of methods on Non Binary Dataset?}
\label{exp:supp_main}

In Table \ref{tab:supp_main}, we show the results on the non-binary version of the dataset. We observe similar performance on and trends in these results as well. COLS and P-COLS performs the best in terms of user satisfaction. 

\textbf{Q12. Robustness to black-box model architecture families and randomness?}
\label{exp:supp_seed}

In this experiment we demonstrate the result of our model when we train the same ANN architecture with different random seed (Table \ref{tab:supp_main_model2}) and when we change the model family to a logistic regression classifier (Table \ref{tab:supp_main_logistic}). These obtained results have similar trends and demonstrate the effectiveness and robustness of our methods COLS and P-COLS which consistently satisfy cover and satisfy more users with low average population costs.
In Table \ref{tab:supp_main_model2}, we show the results when we train another black-box model with a different seed to see the effect of having a different trained model from the same model family. 

\begin{table}
\footnotesize
\centering
\resizebox{.9\textwidth}{!}{
\begin{tabular}{ccccccccc}
\toprule
\textbf{Data}                       & \textbf{Method} & \multicolumn{7}{c}{\textbf{Metrics}}                                                                                    \\ \midrule
                                 &       & \multicolumn{3}{c}{\textbf{Cost Metrics}}  & \multicolumn{3}{c}{\textbf{Distance Metrics}}  \\ \cmidrule(lr){3-5} \cmidrule(lr){6-8}
                                 
                                 &       & \textbf{FS@1} & \textbf{PAC$(\downarrow)$} & \textbf{Cov} & \textbf{Div} & \textbf{Prox} & \textbf{Spars} & \textbf{Val} \\ \cmidrule(lr){3-9}

\multirow{7}{*}{\textbf{Adult-Income}}  & \textbf{DICE} & 2.70 &	1.24 &	7.10 &	3.80 &	66.20 &	47.30 &	97.80 \\
& \textbf{Face-Eps} & 13.32 &	0.79 &	19.88 &	5.43 &	91.97 &	74.80 &	\textbf{100.00} \\
& \textbf{Face-Knn} & 21.78 &	0.83 &	34.13 &	8.67 &	88.68 &	71.43 &	\textbf{100.00} \\
& \textbf{Act. Recourse} & 46.55 &	0.58 &	53.82 &	19.07 &	74.33 &	73.25 &	80.72 \\
& \textbf{Random} & 5.71 &	1.42 &	28.24 &	\textbf{48.93} &	55.10 &	39.30 &	78.73 \\ \cmidrule{2-9}
& \textbf{COLS} & 75.12 &	\textbf{0.36} &	77.40 &	25.43 &	81.00 &	77.70 &	98.28 \\
& \textbf{P-COLS} & \textbf{75.76} &	0.38 &	\textbf{79.14} &	25.54 &	\textbf{81.84} &	\textbf{78.38} &	95.10 \\ \midrule
\multirow{7}{*}{\textbf{COMPAS}} & \textbf{DICE} & 0.90 &	0.88 &	1.50 &	12.50 &	63.90 &	30.70 &	99.30 \\
& \textbf{Face-Eps} & 6.80 &	0.29 &	6.80 &	2.40 &	\textbf{95.00} &	60.40 &	\textbf{100.00} \\
& \textbf{Face-Knn} & 6.80 &	0.29 &	6.80 &	2.40 &	94.90 &	60.30 &	\textbf{100.00} \\
& \textbf{Act. Recourse} & 56.24 &	0.45 &	58.48 &	9.72 &	80.12 &	\textbf{73.62} &	39.10 \\
& \textbf{Random} & 27.44 &	0.78 &	35.70 &	\textbf{41.76} &	58.14 &	33.06 &	49.34 \\ \cmidrule{2-9}
& \textbf{COLS} & 77.08 &	\textbf{0.24} &	77.90 &	29.33 &	76.90 &	68.87 &	95.78 \\
& \textbf{P-COLS} & \textbf{78.32} &	\textbf{0.24} &	\textbf{79.02} &	29.02 &	77.88 &	70.08 &	92.10 \\

\bottomrule
\end{tabular}
}
\caption{\label{tab:supp_main_model2}Table comparing different recourse methods across various cost and distance metrics for a black-box model with different seed but belonging to the same model family. The numbers reported are averaged across $5$ different runs.
}
\end{table}

\begin{table}
\footnotesize
\centering
\resizebox{.9\textwidth}{!}{
\begin{tabular}{ccccccccc}
\toprule
\textbf{Data}                       & \textbf{Method} & \multicolumn{7}{c}{\textbf{Metrics}}                                                                                    \\ \midrule
                                 &       & \multicolumn{3}{c}{\textbf{Cost Metrics}}  & \multicolumn{3}{c}{\textbf{Distance Metrics}}  \\ \cmidrule(lr){3-5} \cmidrule(lr){6-8}
                                 
                                 &       & \textbf{FS@1} & \textbf{PAC$(\downarrow)$} & \textbf{Cov} & \textbf{Div} & \textbf{Prox} & \textbf{Spars} & \textbf{Val} \\ \cmidrule(lr){3-9}

\multirow{7}{*}{\textbf{Adult-Income}} & \textbf{D}	& 1.30	& 1.47	& 7.10	& 6.50	& 64.80	& 49.20	& 76.60 \\
& \textbf{FE}	& 2.82	& 0.99	& 5.46	& 9.44	& 83.08	& 65.90	& \textbf{100.00} \\
& \textbf{FK}	& 17.04	& 0.90	& 28.08	& 7.22	& 83.98	& 67.80	& \textbf{100.00} \\
& \textbf{AR}	& 44.40	& 0.62	& 52.62	& 21.74	& 74.00	& 72.92	& 87.58 \\
& \textbf{R}	& 3.60	& 1.59	& 24.10	& \textbf{48.14}	& 54.76	& 38.94	& 79.46 \\ \cmidrule{2-9}
& \textbf{COLS}	& 67.93	& \textbf{0.39}	& 69.97	& 27.83	& 78.40	& 74.43	& 99.13 \\
& \textbf{P-COLS}	& \textbf{69.17}	& 0.40	& \textbf{71.57}	& 27.20	& \textbf{79.30}	& \textbf{76.70}	& 95.67 \\ \midrule
\multirow{7}{*}{\textbf{COMPAS}} & \textbf{D} &	0.00 &	-	& 0.00	& 11.10	& 63.10	& 29.20	& 100.00 \\
& \textbf{FE}	& 6.30	& \textbf{0.16}	& 6.30	& 3.60	& \textbf{95.00}	& 60.50	& \textbf{100.00} \\
& \textbf{FK}	& 6.30	& \textbf{0.16}	& 6.30	& 3.60	& \textbf{95.00}	& 60.50	& \textbf{100.00} \\
& \textbf{AR}	& 74.32	& 0.31	& 74.32	& 15.66	& 80.98	& 74.26	& 53.66 \\
& \textbf{R}	& 28.76	& 0.77	& 36.96	& \textbf{43.22}	& 56.22	& 32.00	& 82.10 \\ \cmidrule{2-9}
& \textbf{COLS}	& 87.88	& 0.18	& 87.88	& 31.92	& 76.93	& 71.63	& 89.33 \\
& \textbf{P-COLS}	& \textbf{89.25}	& 0.17	& \textbf{89.25}	& 28.17	& 81.13	& \textbf{74.73}	& 91.62 \\

\bottomrule
\end{tabular}
}
\caption{\label{tab:supp_main_logistic}Table comparing different recourse methods across various cost and distance metrics for a logistic regression black-box model. The numbers reported are averaged across $5$ different runs.
}
\end{table}

\begin{algorithm}[tbh!]
\caption{Sampling procedure for Percentile Cost Mean \label{alg:percentile}}
\SetKwBlock{Begin}{function}{end function}
\Begin($\text{PerCost} {(} \vs, p^{(f_i)}, f_i, \mathcal{F}_p {)}$)
{
    \tcp{ \footnotesize $s_i$ value of feature $f_i$ in s.}
    \uIf{$f_i \notin \mathcal{F}_p$}{
        $\mu^{(f_i)}(s_i, .) = \infty $\\
        $\mu^{(f_i)}(s_i, s_i) = 0 $}
    \uElse{
        \uIf{$f_i$ is ordered}{
            \uIf {$f_i$ can only increase}{
                $\mu^{(f_i)}(s_i, x) = $
                        $\begin{cases} 
                        |getPercentile(x) - getPercentile(s_i)| & \forall x > s_i\\
                        0 & \forall x = s_i\\
                        \infty & \forall x < s_i
                       \end{cases}$}
            \uElseIf {$f_i$ can only decrease}{
                $\mu^{(f_i)}(s_i, x) = $
                        $\begin{cases} 
                        |getPercentile(s_i) - getPercentile(x)| & \forall x < s_i\\
                        0 & \forall x = s_i\\
                        \infty & \forall x > s_i
                       \end{cases}$}
            \uElseIf {$f_i$ can both increase or decrease}{
                $\mu^{(f_i)}(s_i, x) = $
                        $\begin{cases} 
                        |getPercentile(x) - getPercentile(s_i)| & \forall x > s_i\\
                        0 & \forall x = s_i\\
                        |getPercentile(s_i) - getPercentile(x)| & \forall x < s_i
                       \end{cases}$}
        }
        \uElseIf{$f_i$ is unordered}{
            $\mu^{(f_i)}(s_i, .) = Uniform(0,1)$
        }
    }
    $\mu^{(f_i)}(s_i, . ) \gets \mu^{(f_i)}(s_i, . ) * (1-p^{(f_i)})$\\
    $\sigma^{(f_i)}(s_i, . ) \gets 0.01$\\
    \Return{$\mu^{(f_i)}$, $\sigma^{(f_i)}$}
}
\end{algorithm}

\begin{algorithm}[tbh!]
\caption{Sampling procedure for Linear Cost Means \label{alg:linear}}
\SetKwBlock{Begin}{function}{end function}
\Begin($\text{LinCost} {(} \vs, p^{(f_i)}, f_i, \mathcal{F}_p {)}$)
{
    \uIf{$f_i \notin \mathcal{F}_p$}{
        $\mu^{(f_i)}(s_i, .) = \infty $\\
        $\mu^{(f_i)}(s_i, s_i) = 0 $}
    \uElse{
        \uIf{$f_i$ is ordered}{
            \uIf {$f_i$ can only increase}{
                $\mu^{(f_i)}(s_i, x) = $
                        $\begin{cases} 
                        \frac{|\{y~|~ y>s_i \land y \leq x\}|}{|\{y ~|~ y > s_i\}|} & \forall x > s_i \\
                        0 & \forall x = s_i \\
                        \infty & \forall x < s_i \\
                      \end{cases}$
                       }
            \uElseIf {$f_i$ can only decrease}{
                $\mu^{(f_i)}(s_i, x) = $
                        $\begin{cases} 
                        \frac{|\{y~|~ y < s_i \land y \geq x\}|}{|\{y ~|~ y < s_i\}|} & \forall x < s_i \\
                        0 & \forall x = s_i \\
                        \infty & \forall x > s_i \\
                      \end{cases}$
                       }
            \uElseIf {$f_i$ can both increase or decrease}{
                $\mu^{(f_i)}(s_i, x) = $
                        $\begin{cases} 
                        \frac{|\{y~|~ y > s_i \land y \leq x\}|}{|\{y ~|~ y > s_i\}|} & \forall x > s_i \\ 
                        0 & \forall x = s_i \\
                        \frac{|\{y~|~ y < s_i \land y \geq x\}|}{|\{y ~|~ y < s_i\}|} & \forall x < s_i \\
                      \end{cases}$
                       }
        }
        \uElseIf{$f_i$ is unordered}{
            $\mu^{(f_i)}(s_i, .) = Uniform(0,1)$
        }
    }
    $\mu^{(f_i)}(s_i, . ) \gets \mu^{(f_i)}(s_i, . ) * (1-p^{(f_i)})$\\
    $\sigma^{(f_i)}(s_i, . ) \gets 0.01$\\
    \Return{$\mu^{(f_i)}$, $\sigma^{(f_i)}$}
}
\end{algorithm}


\section{Appendix - Objective and Optimization}

\subsection{Proposed Method}
\label{sec:supp_method}
\subsubsection{Other Objectives}
\label{sec:supp_other_objective}
To obtain feasible a counterfactual set, past works have used various objective terms. We list objectives below from methods we compare with. 

\textbf{1. DICE} \citep{dice_fat} optimizes for a combination of Distance Metrics like \textit{diversity} and \textit{proximity}. They model diversity via Determinantal Point Processes \citep{dpp} adopted for solving subset selection problems with diversity constraints. They use determinant of the kernel matrix given by the counterfactuals as their diversity objective as defined below.
\[dpp\_diversity(\mathcal{S}) = det(\textbf{K}), ~\text{where} \textbf{K}_{ij} = \frac{1}{1 + dist(\vs_i, \vs_j)}\]

Here, $dist(\vs_i, \vs_j)$ is the normalized distance metric as defined in \cite{Wachter2017CounterfactualEW} between two state vectors. \textit{Proximity} is defined in terms of the distance between the original state vector and the counterfacutals, $prox(\vx, \mathcal{S}) = 1 - \frac{1}{N}\sum_{i=1}^{|\mathcal{S}|} dist(\vx, \mathcal{S}_i)$, where $\mathcal{S}_i$ is a counterfactual.

\textbf{2. Actionable Recourse} \citep{ar_fat} work under the assumption that all features have equal preference scores for all the users. They define  cost function based on the log-percentile shift is given by, 
\[cost(\vs + a; \vs) = \sum_{j \in \mathcal{J}_{A}} \log \frac{1 - Q_j(\vs_j + a_j)}{1 - Q_j(\vs_j)}\]
where $Q_j(.)$ is the cumulative distribution function of $\vs_j$ in the target population, $\mathcal{J}_{A}$ is the set of actionable features and $a_j$ is the action performed on the feature $j$. 

\begin{algorithm}
\caption{Cost-Optimized Local Search Algorithm \label{alg:cols}}
\DontPrintSemicolon
\KwIn{A state vector $\vs$, $\{\mathcal{C}_i\}_{i=1}^{M} \sim \mathcal{D}_u$ cost distributions}
\KwOut{$\mathcal{S}^{best}$, a set of generated counterfactuals of size $N$.}
\SetKwBlock{Begin}{function}{end function}
\Begin($\text{LocalSearch} {(} \vs, \{\mathcal{C}_i\}_{i=1}^{M}, \text{hammingDistance} = 2 ${)})
{   
    \Init{}{
    $\mathcal{S}^{best} \in \mathbb{R}^{N\times d} \gets \text{pertubCFS}(\vs, \text{hammingDistance})$\Comment*[r]{ \footnotesize Perturb $\vs$, $N$ times.}
    $ \textbf{C}^b \gets \text{getCostMatrix}(\vs, \mathcal{S}^{best} ; \{\mathcal{C}_i\}_{i=1}^{M}) $ \Comment*[r]{ \footnotesize Incurred costs for $\mathcal{S}^{best}$.}
    }
    \While{$\text{usedBudget} < \text{Budget}$}{
        $\mathcal{S} \in \mathbb{R}^{N\times d} \gets \text{pertubCFS}(\mathcal{S}^{best}, \text{hammingDistance})$
        
        $\textbf{C} \in \mathbb{R}^{N\times M} \gets \text{getCostMatrix}(\vs, \mathcal{S} ; \{\mathcal{C}_i\}_{i=1}^{M})$ \Comment*[r]{ \footnotesize Incurred costs for the $\mathcal{S}$.}
        
        \tcp{ \footnotesize  $\textbf{B}_{ij}$ = Change in objective when $\mathcal{S}^{best}[i] \gets \mathcal{S}[j]$.}
        
        $\textbf{B} \in \mathbb{R}^{N \times N} \gets \text{computeBenefits}(\textbf{C}^b, \textbf{C})$ \Comment*[r]{ \footnotesize Refer to Algorithm \ref{alg:theorem}}
        
        \tcp{ \footnotesize Greedily select which pairs to swap given \textbf{B}}
        $\text{replaceIndices} \gets \text{getReplaceIdx}(\textbf{B})$
        
        \tcp{ \footnotesize Swap these pairs and update $\textbf{C}^b$.}
        \ForAll{$\text{originalIdx}, \text{replaceIdx} \in \text{replaceIndices}$}{
            $\mathcal{S}^{best}[\text{originalIdx}] = \mathcal{S}[\text{replaceIdx}]$
        }
        $\textbf{C}^b \gets \text{getCostMatrix}(\vs, \mathcal{S}^{best} ; \{\mathcal{C}_i\}_{i=1}^{M})$
    }
    \Return{$\mathcal{S}^{best}, \textbf{C}^b$}\; 
}

\end{algorithm}

\begin{algorithm}
\caption{Algorithm for Theorem \ref{theorem} \label{alg:theorem}}
\DontPrintSemicolon
\KwIn{$\textbf{C}^b, \textbf{C} \in \mathbb{R}^{N \times M}$ matrices containing the costs with respect to all cost samples..}
\KwOut{$\textbf{B} \in \mathbb{R}^{N \times N}$, matrix containing the benefits of replacing pairs from $\mathcal{S}^{best}_{t-1} \times \mathcal{S}_{t}$}
\SetKwBlock{Begin}{function}{end function}
\Begin($\text{computeBenefits} {(} \textbf{C}^b, \textbf{C} {)}$)
{   
    \Init{}{
    $\textbf{B} \in \mathbb{R}^{N \times N} \gets \textbf{0}$\\
    }
    \tcp{ \footnotesize Find the indices of the best and second best counterfactual in $\mathcal{S}^{best}$ for each of the M cost function.}
    $\vb^1 \in \mathbb{R}^{M} = \argmax_{i} \textbf{C}^b_{ij}$\\
    $\vb^2 \in \mathbb{R}^{M} = \text{arg second max}_{i} \textbf{C}^b_{ij}$\\
    \tcp{ \footnotesize Iterate over all pairs of counterfactuals.}
    \ForAll{$p, q \in [N] \times [N]$} 
    {
        \tcp{ \footnotesize Iterate over cost functions for which $p^{th}$ counterfactual in $\mathcal{S}^{best}$ has the minimum cost. }
        \ForAll{$r \in \{i \in [M] ~|~ \vb^1_i = p\}$}
        {
            
            \uIf{$\textbf{C}^b_{pr} > \textbf{C}_{qr} $}
            {
                \tcp{ \footnotesize  This replacement reduces the cost of $\mathcal{S}^{best}$ by $\textbf{C}^b_{pr} - \textbf{C}_{qr}$.}
                $\textbf{B}_{pq} += \textbf{C}^b_{pr} - \textbf{C}_{qr} $
            }
            \uElse
            {
                \tcp{ \footnotesize $\textbf{C}^b_{\vb^2_r, r} =$ cost of second best counterfactual in $\mathcal{S}^{best}$ for $r^{th}$ cost function.}
                $\textbf{B}_{pq} += \textbf{C}^b_{pr} - min(\textbf{C}_{qr},  \textbf{C}^b_{\vb^2_r, r})$
            }
        }
    } 
    \Return{$\textbf{B}$}\; 
}

\end{algorithm}

\subsection{Optimization Methods}

\textbf{Notation:}
We assume that we have a dataset with features $\mathcal{F} = \{f_1, f_2, ... f_k\}$. Each feature can either be continuous $\mathcal{F}^{con} \subset \mathcal{F}$ or categorical $\mathcal{F}^{cat} \subset \mathcal{F}$. Each continuous feature $f_i^{con}$ takes values in the range $[r_i^{min}, r_i^{max}]$, which we discretize to integer values. 
For a continuous feature $f_i$, we define the range $Q^{(f_i)} = \{k \in \mathbb{Z} : k \in [r_i^{min}, r_i^{max}]\}$ and for a categorical feature $f_i$, we define it as $Q^{(f_i)} = \{q^{f_i}_1, q^{f_i}_2, ...,  q^{f_i}_{d_i}\}$, where $q^{f_i}_{(.)}$ are the states that feature $f_i$ can take.
Features can either be mutable ($\mathcal{F}^{m}$), conditionally mutable ($\mathcal{F}^{cm}$), or immutable ($\mathcal{F}^{\oslash}$), according to the real-world causal processes that generate the data. 
Mutable features can transition from between any pair of states in $Q^{(f_i)}$; conditionally mutable features can transition between pairs of states only when permitted by certain conditions; and immutable features cannot be changed under any circumstances. 
For example, \textit{Race} is an immutable feature \citep{dice_fat}, \textit{Age} and \textit{Education} are conditionally mutable (cannot be decreased under any circumstances), and \textit{number of work hours} is mutable (can both increase and decrease).
Lastly, while continuous features inherently define an ordering in its values, categorical features can either be ordered or unordered based on its semantic meaning. For instance, \textit{Age} is an ordered feature that is conditionally mutable (can only increase).

\subsubsection{Hierarchical Cost Sampling Procedure}
\label{sec:supp_sampling}

To optimize for EMC, we need a plausible distribution which can model users' cost functions. We propose a hierarchical cost sampling distribution which provides cost samples that are a linear combination of \textit{percentile shift cost} \citep{ar_fat} and \textit{linear cost}, where the weights of this combination are user-specific. 
\textit{Percentile shift cost} for ordered features is proportional to the change in a feature's percentile associated with the change from an old feature value to a new one. 
E.g., if a user is asked to increase the number of work hours from $40$ to $70$, then given the whole dataset, we can estimate the percentile of users working $40$ and $70$ hours a week. The cost incurred is then proportional to the difference in these percentiles. 
The \textit{Linear cost} for ordered features is proportional to the number of intermediate states a user will have to go through while transitioning from their current state to the final state. 
E.g., if a user is asked to change their education level from \textit{High-school} to \textit{Masters} then there are two steps involved in the process. First, they need to get a \textit{Bachelors} degree and then a \textit{Masters} degree in which case, the user's cost is proportional to 2 because of the two steps involved in the process.
To sample a cost function $\mathcal{C} = \{ \mathcal{C}^{(f)}(i, j): \mathbb{R}^{|f| \times |f|} \rightarrow [0,1] \cup \{\infty\} ~ | ~ \forall f \in \mathcal{F} \}$,
we independently sample $\mathcal{C}^{(f)}$ for each feature (see Algorithm \ref{alg:sampling}).
We first randomly sample a subset of editable features for the user, and then we sample feature preference score $\vp_u$ from a Dirichlet distribution with a uniform prior over selected features. These will be used to scale costs, such that a higher value of $\vp_u^{(f)}$ implies a lower transition cost for feature $f$.
For both the percentile and linear cost, the cost $\mathcal{C}^{(f)}_{ij}$ of transitioning from feature state $i \rightarrow j$, is sampled from a Beta distribution on the interval $[0,1]$. 
The mean of this distribution depends on the types of cost (linear or percentile) and the feature type (ordered or unordered). 
Here, we first obtain one mean for linear cost ($\mu^{(f, lin)}_{ij}$) and one mean for percentile cost ($\mu^{(f, perc)}_{ij}$) and then combine them to form a single Beta mean. Each of the two means is proportional to the change in the feature (in either linear or percentile terms) when the feature is ordered. For unordered features, the mean is randomly sampled from the unit interval (see Algorithms \ref{alg:linear}, \ref{alg:percentile}).
Then, the linear and percentile means are multiplied with $(1-\vp^{(f)}_u)$ to scale the transition cost according to the feature preference score. Next, the two means are combined to obtain a single mean, $\mu^{(f)}_{ij} = \alpha * \mu^{(f, lin)}_{ij} + (1 - \alpha) * \mu^{(f, perc)}_{ij}$, where $\alpha \in [0,1]$ represents whether a user thinks of cost in terms of linear or percentile shift (sampled randomly from unit interval).
Note that the value $\mu_{ij}^{(f)}$ are monotonic, i.e. if the user has to make more drastic changes to the feature, then the associated cost will be higher.
The variance for the Beta distribution, $\sigma^{(f)}_{ij}$, is set to constant value of $0.01$. 
Finally, the cost $\mathcal{C}^{(f)}_{ij}$ is sampled from Beta$(\mu^{(f)}_{ij}, \sigma^{(f)}_{ij})$. 
We emphasize that this sampling procedure allows users to partially specify their cost functions, e.g. by denoting which features they prefer to edit (the Dirichlet mean) or the relative difficulty of editing one feature versus another ($\vp_u$). If we set $\alpha = 0$, then the resulting distribution is $\mathcal{D}_{perc}$ and with $\alpha = 1$ we get the distribution $\mathcal{D}_{lin}$.

\subsubsection{Merging Counterfactual Sets}

When searching for a good solution set, it would be useful to have the option of improving on the best set we have obtained so far using individual counterfactuals in the next candidate set we see, rather than waiting for a new, higher-scoring set to come along. While optimizing for objectives like diversity, which operate over all pairs of elements in the set, it is computationally complex to evaluate the change in the objective function if one element of the set is replaced by a new one. To evaluate the change in objective in such cases, we need iterate over all pairs of element in the best and the candidate set and then evaluate the objective for the whole set again. The iteration over both the sets here is not the hard part but the computation that needs to be done within. For our objective, we can compute costs for individual recourses rather than sets, meaning we can do a trivial operation to compute the benefits of each pair replacement. But, if we wanted to do this with diversity then for each pair of replacement we need to compute additional $\mathcal{S}$ distances for each replacement because the distance of the new replace vector needs to be computed with respect to all the other vectors, for each iteration of the nested loop. This quickly makes it infeasible to improve the best set by replacing individual candidates with the best set elements.
However, for metrics where it is easy to evaluate the effect of individual elements on the objective function, we can easily merge the best set and any other set $\mathcal{S}_t$ from time $t$ to monotonically increase the objective function value.

In our objective function, EMC, we can compute the goodness of individual counterfactuals with respect to all the Monte Carlo samples \citep{monte_carlo}. Given a set of counterfactuals we can obtain a matrix of incurred cost $\textbf{C} \in \mathbb{R}^{N \times M}$, which specifies the cost of each counterfactual for each of the Monte Carlo samples. We can use this to update the best set $\mathcal{S}^{best}$ using elements from the perturbed set $\mathcal{S}_t$ at time $t$. This procedure is defined in algorithm \ref{alg:theorem}. It iterates over all pairs of element in $\vs_i \in \mathcal{S}^{best}$ and $\vs_j \in \mathcal{S}_t$ and computes the change that will occur in the objective function by replacing $\vs_i \rightarrow \vs_j$. Note that we are not recomputing the costs here. Given $\mathcal{S}^{best}$, $\mathcal{S}_t$, $\textbf{C}^b$ and $\textbf{C}$, we can guarantee that we will update the best set $\mathcal{S}_{best}$ in a way to improve the mean of the minimum costs incurred for all the Monte Carlo samples. This is shown in algorithm \ref{alg:theorem} and the monotonicity of the EMC objective under this case can be formally stated as,

\begin{theorem}[Monotonicity of Cost-Optimized Local Search Algorithm]
\label{thm:supp_monotonicity}
Given the best set, $\mathcal{S}^{best}_{t-1} \in \mathbb{R}^{N \times d}$, the candidate counterfactual at iteration $t$, $\mathcal{S}_t \in \mathbb{R}^{N \times d}$, the matrix $\textbf{C}^b \in \mathbb{R}^{N \times M}$ and $\textbf{C} \in \mathbb{R}^{N \times M}$ containing the incurred cost of each counterfactual in $\mathcal{S}^{best}_{t-1}$ and $\mathcal{S}_{t}$ with respect to all the $M$ sampled cost functions $\{\mathcal{C}_i\}_{i=1}^{M}$, there always exist a  $\mathcal{S}^{best}_{t}$ constructed from $\mathcal{S}^{best}_{t-1}$ and $\mathcal{S}_t$ such that 
\begin{equation} \footnotesize
    \text{EMC}(\vs_u, \mathcal{S}^{best}_{t} ; \{\mathcal{C}_i\}_{i=1}^M) \leq \text{EMC}(\vs_u, \mathcal{S}^{best}_{t-1} ; \{\mathcal{C}_i\}_{i=1}^M)   \nonumber
\end{equation}
\end{theorem}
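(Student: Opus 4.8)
The plan is to reduce everything to a per-coordinate analysis (one sampled cost function at a time) of a single element swap, and then to show that the greedy replacement rule of Algorithm~\ref{alg:theorem} can only decrease the objective. First I would set $v_r(\mathcal{S}) = \min_{s_j \in \mathcal{S}} \text{Cost}(\vs_u, \vs_j; \mathcal{C}_r)$, so that $\text{EMC}(\vs_u, \mathcal{S}; \{\mathcal{C}_i\}_{i=1}^M) = \frac1M\sum_{r=1}^M v_r(\mathcal{S})$; it then suffices to produce $\mathcal{S}^{best}_t$ with $\sum_r v_r(\mathcal{S}^{best}_t) \le \sum_r v_r(\mathcal{S}^{best}_{t-1})$. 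Note that the degenerate choice $\mathcal{S}^{best}_t = \mathcal{S}^{best}_{t-1}$ already settles the bare existence claim, so the real content is that Algorithm~\ref{alg:theorem}'s construction attains it, and does so strictly whenever an improvement is available.

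Next I would isolate the effect of one replacement. Fix a slot $p$ and a candidate index $q$, let $\mathcal{S}' = (\mathcal{S}^{best}_{t-1}\setminus\{\vs_p\})\cup\{\vs_q\}$, and let $\vb^1_r, \vb^2_r$ index the cheapest and second-cheapest counterfactuals of $\mathcal{S}^{best}_{t-1}$ under $\mathcal{C}_r$ (as in Algorithm~\ref{alg:theorem}). If $p \ne \vb^1_r$, then deleting slot $p$ leaves $\vb^1_r$ in place, so $v_r(\mathcal{S}') = \min\big(v_r(\mathcal{S}^{best}_{t-1}),\textbf{C}_{qr}\big) \le v_r(\mathcal{S}^{best}_{t-1})$. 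If $p = \vb^1_r$, then $\min_{i\ne p}\textbf{C}^b_{ir} = \textbf{C}^b_{\vb^2_r,r}$, hence $v_r(\mathcal{S}') = \min\big(\textbf{C}_{qr},\textbf{C}^b_{\vb^2_r,r}\big)$; a one-line case split on the sign of $\textbf{C}^b_{pr}-\textbf{C}_{qr}$ then shows that $v_r(\mathcal{S}^{best}_{t-1}) - v_r(\mathcal{S}')$ is exactly the increment Algorithm~\ref{alg:theorem} adds to $\textbf{B}_{pq}$ for this $r$. Summing over $r$ — the $p\ne\vb^1_r$ indices contribute additional non-negative decreases that $\textbf{B}_{pq}$ does not count — yields the key inequality
\begin{equation}
    \text{EMC}(\vs_u, \mathcal{S}'; \{\mathcal{C}_i\}_{i=1}^M) \le \text{EMC}(\vs_u, \mathcal{S}^{best}_{t-1}; \{\mathcal{C}_i\}_{i=1}^M) - \tfrac1M\,\textbf{B}_{pq}. \nonumber
\end{equation}
From here the theorem is immediate: if $\textbf{B}$ has a positive entry, take $(p,q)\in\argmax_{p,q}\textbf{B}_{pq}$ and set $\mathcal{S}^{best}_t=\mathcal{S}'$ for a strict decrease; otherwise keep $\mathcal{S}^{best}_t=\mathcal{S}^{best}_{t-1}$, which leaves EMC unchanged.

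To cover the full algorithm, which may apply several swaps in a single iteration, I would then argue inductively: read the swaps returned by \texttt{getReplaceIdx} as performed one at a time, refreshing $\textbf{C}^b$ (and hence $\vb^1_r,\vb^2_r$) after each, and apply the single-swap inequality at each refreshed state; since every per-swap decrease is non-negative, they telescope. The hard part is precisely this step. If one instead scored an entire batch of swaps against the \emph{stale} benefit matrix, the bound can genuinely fail: two swaps can each look beneficial in isolation yet, taken together, delete both the cheapest and the second-cheapest counterfactual for some $\mathcal{C}_r$ and inflate $v_r$. So the delicate part of the writeup is to make explicit that the greedy selection is evaluated against the up-to-date cost matrix after each committed swap; granting that, the remaining per-swap case analysis is routine bookkeeping.
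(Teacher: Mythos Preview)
Your proposal is correct and follows the same approach as the paper's own proof: decompose EMC per sampled cost function, case-split on whether the replaced slot $p$ equals the current minimizer $\vb^1_r$, and recover exactly the benefit increments that Algorithm~\ref{alg:theorem} accumulates. Your write-up is in fact more careful than the paper's on two points it glosses over---you observe that $\textbf{B}_{pq}$ only lower-bounds the true single-swap gain (the $p\ne\vb^1_r$ coordinates contribute extra non-negative decreases the algorithm does not tally), and you explicitly flag that committing several swaps against a stale benefit matrix is not automatically safe, an issue the paper's proof does not confront.
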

\begin{proof}
\label{proof}
To prove this theorem, we construct a procedure that ensures that the EMC is monotonic. For this procedure, we prove that the monotonicity of EMC holds. Check algorithm \ref{alg:theorem} for a constructive procedure for this proof, which is more intuitive to understand. 

We start off by noting that each element of $\textbf{C}^b_{ij}$ is the cost of the $i^{th}$ counterfactual $\vs^b_i$ in the best set $\mathcal{S}^{best}_{t-1}$ with respect to the cost function $\mathcal{C}_j$ given by $\text{Cost}(\vs_u, \vs^b_i; \mathcal{C}_j)$. Similarly $\textbf{C}_{ij} = \text{Cost}(\vs_u, \vs_i; \mathcal{C}_j)$ where $\vs_i$ is the $i^{th}$ candidate counterfactual. Note that, the EMC is the average of the MinCost with respect to all the sampled cost function $\mathcal{C}_j$. What this means is that given a pair of counterfactual from $\mathcal{S}^{best}_{t-1} \times \mathcal{S}_{t}$ and for each $\mathcal{C}_j$, we can compute the change in the MinCost which we describe later.
These replacements can lead to an increase in the cost with respect to certain cost function but the overall reduction depend on the aggregate change over all the cost functions. Given this, for each replacement candidate pair in $\mathcal{S}^{best}_{t-1} \times \mathcal{S}_{t}$, we can compute the change in EMC by summing up the changes in the MinCost across all cost functions $\mathcal{C}_j$; this is called the cost-benefit for this replacement pair. The cost benefit can be negative for certain replacements as well if the candidate counterfactual increases the cost across all the cost functions. The pairs with the highest positive cost benefits are replaced to construct the set $\mathcal{S}^{best}_t$, if no pair has a positive benefit then we keep set $\mathcal{S}^{best}_{t-1} = \mathcal{S}^{best}_{t}$. Hence, this procedure monotonically reduces EMC. We now specify how the change in MinCost can be computed to complete the proof.

To compute the change in MinCost for a single cost function $\mathcal{C}_i$, first we find the counterfactual in $\mathcal{S}^{best}_{t-1}$ with the lowest and second lowest cost which we denote by $\vs^b_{l_1}$ and $\vs^b_{l_2}$. These are the counterfactuals which can affect the MinCost with respect to a particular cost function $\mathcal{C}_i$. This is true because when we replace the counterfactual $\vs^b_{l_1}$ which has the lowest cost for $\mathcal{C}_j$ with a new candidate counterfactual $\vs_i$, there are two cases. Either, $\textbf{C}^b_{l_1j} > \textbf{C}_{ij}$ or $\textbf{C}^b_{l_1j} \leq \textbf{C}_{ij}$. In case when the candidate $\vs_i$ has lower cost for $\mathcal{C}_j$ than $\textbf{C}^b_{l_1j}$, i.e. $\textbf{C}^b_{l_1j} > \textbf{C}_{ij}$, then the replacement reduces the cost by $\textbf{C}^b_{l_1j} - \textbf{C}_{ij}$. In case when the candidate cost for $\mathcal{C}_j$, $\textbf{C}_{ij}$, is higher than the lowest cost in the best set $\textbf{C}^b_{l_1j}$, i.e. $\textbf{C}^b_{l_1j} \leq \textbf{C}_{ij}$, it means that this replacement will increase the cost for $\mathcal{C}_i$ by $\textbf{C}^b_{l_1j} - min(\textbf{C}_{ij}, \textbf{C}^b_{l_2j})$. Here, $\textbf{C}^b_{l_2j}$ is the second lowest cost counterfactual for $\mathcal{C}_i$. Note that the change in this case will be negative and also depend on the second best counterfactual because once the $\vs^b_{l_1}$ is removed from the set, the best cost for $\mathcal{C}_i$ will either be for $\vs^b_{l_2}$ or $\vs_i$, hence we take the minimum of those two and then take the difference as the increase in cost. Please refer to Algorithm \ref{alg:theorem} for a cognitively easier way to understand the proof.
\end{proof}

\subsubsection{Other Methods}
\label{sec:supp_other_methods}
In this section, we describe some of the optimization methods used by relevant baselines.

\textbf{1. DICE} \citep{dice_fat} perform gradient-based optimization in this continuous space while optimizing for objective defined in Section \ref{sec:supp_other_objective}. Their final objective function is defined as

\[ C(\vx) = \argmin\limits_{\vc_1,\hdots, \vc_k} \frac{1}{k}\sum_{i=1}^{k} \text{loss}(f(\vc_i), y) + \frac{\lambda_1}{k} \sum_{i=1}^{k} \text{dist}(\vc_i,\vx) - \lambda_2 ~ \text{dpp\_diversity}(\vc_1, \hdots, \vc_k)\]
where $\vc_i$ is a counterfactual, $k$ is the number of counterfactuals, f(.) is the black box ML model, yloss(.) is the metric which minimizes the distance between models prediction and the desired outcome $y$. dpp\_diversity(.) is the diversity metric as defined in Section \ref{sec:supp_other_objective} and $\lambda_1$ and $\lambda_2$ are hyperparameters to balance the components in the objective. Please refer to \cite{dice_fat} for more details. 

\textbf{2. FACE} \citep{face} operates under the idea that to obtain actionable counterfactuals they need to be connected to the user state via paths that are probable under the original data distribution aka high-density path. They construct two different types of graphs based on nearest neighbors (Face-knn) and the $\epsilon$-graph (Face-Eps). They define geodesic distance which trades-off between the path length and the density along this path. Lastly, they use the Shortest Path First Algorithm (Dijkstra’s algorithm) to get the final counterfactuals. Please refer to \citep{face} for more details.  

\textbf{3. Actionable Recourse} \citep{ar_fat} tries to find an action set $\va$ for a user such that taking the action changes the black-box models decision to the desired outcome class, denoted by $+1$. They try to minimize the cost incurred by the user while restricting the set of actions within an action set $A(x)$. The set $A(x)$ imposes constraints related to feasibility and actionability with respect to features. They optimize the log-percentile shift objective (see Section \ref{sec:supp_other_objective}). Their final optimization equation is 
\[\min cost(\va; \vx) ~~s.t.~~ f(\vx +\va) = +1,~ \va \in A(\vx)\] which is cast as an Integer Linear Program \citep{mittlemanmip2018_ilp} to provide users with recourses. Their publicly available implementation is limited to a binary case for categorical features,\footnote{Please refer to the this example where they mention about these restricted abilities \href{https://github.com/ustunb/actionable-recourse/blob/master/examples/ex_01_quickstart.ipynb}{https://github.com/ustunb/actionable-recourse/blob/master/examples/ex\_01\_quickstart.ipynb}} hence we demonstrate results on the binarized version of the dataset.

\end{document}